\DeclarePairedDelimiter{\diagfences}{(}{)}
\newcommand{\diag}{\operatorname{diag}\diagfences}
\newcommand{\mvec}{\operatorname{vec}\diagfences}
\newcommand{\tr}{\operatorname{tr}\diagfences}
\newcommand{\sign}{\operatorname{sign}\diagfences}
\DeclarePairedDelimiter\floor{\lfloor}{\rfloor}
\newcommand{\dd}[1]{\mathrm{d}#1}
\begin{document}

	
%
\title{Towards Interpretable Multi-Task Learning Using Bilevel Programming}

\toctitle{Towards Interpretable Multi-Task Learning Using Bilevel Programming}
\tocauthor{Francesco~Alesiani , Shujian~Yu , Ammar~Shaker, Wenzhe~Yin }

\titlerunning{Towards Interpretable Multi-Task Learning Using Bilevel Programming}
\authorrunning{F. Alesiani \and S. Yu \and A. Shaker \and W. Yin }

%
%

\author{Francesco Alesiani\inst{1}  (\Letter)  \and
	Shujian Yu\inst{1}  \and
	Ammar Shaker\inst{1}  \and
	Wenzhe Yin\inst{1}
}


%
%




\institute{NEC Laboratories Europe, $69115$ Heidelberg, Germany \\
	\url{http://www.neclab.eu} \thanks{ Manuscript accepted at 
ECML PKDD 2020.}\\
	\email{$\{$Francesco.Alesiani,Shujian.Yu,Ammar.Shaker$\}$@neclab.eu}
	\email{Wenzhe.Yin@stud.uni-heidelberg.de\thanks {Work done while at the NEC Laboratories Europe.}} 
}





%
\maketitle              
\setcounter{footnote}{0}

\begin{abstract}
Interpretable Multi-Task Learning can be expressed as learning a sparse graph of the task relationship based on the prediction performance of the learned models. 
Since many natural phenomenon exhibit sparse structures, enforcing sparsity on learned models reveals the underlying task relationship. Moreover, different sparsification degrees from a fully connected graph uncover various types of structures, like cliques, trees, lines, clusters or fully disconnected graphs.
In this paper, we propose a bilevel formulation of multi-task learning that induces sparse graphs, thus, revealing the underlying task relationships, and an efficient method for its computation.
We show empirically how the induced sparse graph improves the interpretability of the learned models and their relationship on synthetic and real data, without sacrificing generalization performance. Code at \url{https://bit.ly/GraphGuidedMTL}

	
\keywords{Interpretable machine learning  \and Multi-Task learning \and Structure learning \and Sparse graph \and Transfer Learning
}
\end{abstract}
\section{Introduction}
Multi-task learning (MTL) is an area of machine learning that aims at exploiting relationships among tasks to improve the collective generalization performance of all tasks. In MTL, learning of different tasks is performed jointly, thus, it transfers knowledge from information-rich tasks via task relationship \cite{zhang2014regularization} so that the overall generalization error can be reduced. MTL has been successfully applied in various domains ranging from transportation \cite{deng2017situation} to biomedicine \cite{li2018multi}.  The improvement with respect to learning each task independently is significant when each task has only a limited amount of training data \cite{argyriou2008convex}. 

Various multi-task learning algorithms have been proposed in the literature (see Zhang and Yang \cite{Zhang2017Survey} for a comprehensive survey on state-of-the-art 
methods). Feature learning approaches \cite{argyriou2007multi} and low-rank approaches 
\cite{chen2011integrating} 
assume all the tasks are related, which may not be true in real-world applications. Task clustering approaches 
\cite{kumar2012learning} 
can deal with the situation where different tasks form clusters. 
He et al. \cite{he_efficient_2019} propose a MTL method that is both accurate and efficient; thus applicable in presence of large number of tasks, as in the retail sector.
However, despite being accurate and scalable, these methods lack interpretability, when it comes to task relationship. 
Trustworthy Artificial Intelligence, is an EU initiative to capture the main requirements of ethical AI. Transparency and Human oversight are among the seven key requirements developed by the AI Expert Group 
\cite{intelligence_policy_2019}. 
Even if MTL improves performance w.r.t to individual models, predictions made by black-box methods can not be used as basis for decisions, unless justified by interpretable models \cite{guidotti_survey_2018,lipton_mythos_2018}.

Interpretability can be defined locally. LIME \cite{ribeiro_why_2016} and its generalizations \cite{lundberg_unified_2017} are local method that extract features for each test sample that most contribute to the prediction and aim at finding a sparse model that describes the decision boundary. These methods are applied downstream of an independent black-box machine learning method that produces the prediction. Inpretability can also achieved globaly. For example linear regression, logistic regression and decision tree \cite{guidotti_survey_2018,lipton_mythos_2018} are considered interpretable\footnote{inteterpretability depends also on the application, where for example it may be associated with weights being integer, and can be defined thus differently}, since their parameters are directly interpretable as weights on the input features. Global interpretability, on the other hand, could reduce accuracy. In general, MTL methods \cite{ruder2017overview,liu_multi-task_2018} are not directly interpretable, unless single tasks are learned as linear models which are considered interpretable due to their simplicity \cite{saha_online_2011,goncalves_multi-task_2014,murugesan_adaptive_2016}. This property, however, is no longer guaranteed when tasks and their relations are learned simultaneously, mainly because the relative importance of task relationship is not revealed. 
Since natural phenomena are often characterized by sparse structures, we explore the interpretability resulting from imposing the relationship among tasks to be sparse.





To fill the gap of interpretabilty in MTL, this paper introduces a novel algorithm, named \textbf{G}raph \textbf{G}uided \textbf{M}ulti-\textbf{T}ask regression \textbf{L}earning (GGMTL). It integrates the objective of joint interpretable (i.e. sparse) structure learning with the multi-task model learning.
GGMTL enjoys a closed-form hyper-gradient computation on the edge cost; it also provides a way to learn the graph's structure by exploiting the linear nature of the regression tasks, without excessively scarifying the accuracy of the learned models. The detailed contribution of this paper is multi-fold:

\begin{description}
	\item \textbf{Bilevel MTL Model:} A new model for the joint learning of sparse graph structures and multi-task regression that employs graph smoothing on the prediction models (Sec.\ref{sec:bilevel});
	\item \textbf{Closed-form hyper-gradient:} Presents a closed-form solution for the hyper-gradient of the graph smoothing multi-task problem (Sec.\ref{sec:hg}) 
    \item \textbf{Interpretable Graph:} The learning of interpretable graph structures;
	\item \textbf{Accurate Prediction:} Accurate predictions on both synthetic and real-world datasets despite the improved interpretability (Sec.\ref{sec:experiment}); 
	\item \textbf{Efficient computation:} of the hyper-gradient of the proposed bilevel problem; 
	\item \textbf{Efficient method:} that solves the proposed bilevel problem (Sec.\ref{sec:l22-GGMTL},\ref{sec:l2-GGMTL});
	\item \textbf{Veracity measures:} { to evaluate the fidelity of learned MTL graph structure (Sec.\ref{sec:veradicity})}

\end{description}


\section{Related work}

\subsection{Multi-task structure learning}
Substantial efforts have been made on estimating model parameters of each task and the mutual relationship (or dependency) between tasks. Usually, such relationship is characterized by a dense task covariance matrix or a task precision matrix (a.k.a., the inverse of covariance matrix). Early methods 
(e.g.,~\cite{obozinski2010joint}) 
assume that all tasks are related to each other. However, this assumption is over-optimistic and may be inappropriate for certain applications, where different tasks may exhibit different degrees of relatedness. To tackle this problem, more elaborated approaches, such as clustering of tasks (e.g.,~\cite{jacob2009clustered}) or hierarchical structured tasks (e.g.,~\cite{han2015learning}) have been proposed in recent years.

The joint convex learning of multiple tasks and a task covariance matrix was initialized in Multi-Task Relationship Learning (MTRL)~\cite{zhang2010convex}. Later, the Bayesian Multi-task with Structure Learning (BMSL)~\cite{goncalves2019bayesian} improves MTRL by introducing sparsity constraints on the inverse of task covariance matrix under a Bayesian optimization framework. On the other hand, the recently proposed multi-task sparse structure learning (MSSL)~\cite{gonccalves2016multi} directly optimizes the precision matrix using a regularized Gaussian graphical model. One should note that, although the learned matrix carries partial dependency between pairwise tasks, there is no guarantee that the learned task covariance or prediction matrix can be transformed into a valid graph Laplacian~\cite{dong2016learning}. From this perspective, the learned task structures from these works suffer from poor interpretability.

\subsection{Bilevel optimization in machine learning}

Bilevel problems~\cite{colson2007overview} raise when a problem (outer problem) contains another optimization problem (inner problem) as constraint. Intuitively, the outer problem (master) defines its solution by predicting the behaviour of the inner problem (follower).
In machine learning, hyper-parameter optimization tries to find the predictive model's parameters $w$, with respect to the hyper-parameters vector $\lambda$ that minimizes the validation error. This can be mathematically formulated as the bilevel problem
\begin{subequations} \label{eq:biproblem_general}
	\begin{eqnarray}
	\min _{\lambda} F(\lambda) &=& E_{s \sim D^{\text{val}}} \{ f(w_{\lambda},\lambda,s)\} \label{biproblem_general_outter} \\
	\text{s.t.} ~ w_{\lambda} &=& \arg \min_{w} E_{s' \sim D^{\text{tr}}} \{ g(w,\lambda,s') \} \label{biproblem_general_inner} ,
	\end{eqnarray}
\end{subequations}
The outer objective is the minimization of the generalization error $ E_{s \sim D^{\text{val}}} \{ f(w_{\lambda}, \allowbreak \lambda, s) \}$ on the hyper-parameters and validation data $D^{\text{val}}$, whereas $E_{s \sim D^{\text{tr}}} \{ g(w, \allowbreak \lambda, s)\}$ is the regularized empirical error on the training data $D^{\text{tr}}$, see \cite{franceschi_bilevel_2018}, where $D^{\text{val}} \bigcup D^{\text{tr}} = D$. 
The bilevel optimization formulation has the advantage of allowing to optimize two different cost functions (in the inner and outer problems) on different data (training/validation), thus, alleviating the problem of over-fitting and implementing an implicit cross validation procedure.

In the context of machine learning, bilevel optimization has been adopted mainly as a surrogate to the time-consuming cross-validation which always requires grid search in high-dimensional space. For example, \cite{jenni2018deep} formulates cross-validation as a bilevel optimization problem to train deep neural networks for improved generalization capability and reduced test errors. \cite{frecon2018bilevel} follows the same idea and applies bilevel optimization to group Lasso~\cite{yuan2006model} in order to determine the optimal group partition among a huge number of options.

Given the flexibility of bilevel optimization, it becomes a natural idea to cast multi-task learning into this framework. Indeed, \cite[Chapter~5]{kunapuli2008bilevel} first presents such a formulation by making each of the individual hyperplanes (of each task) less susceptible to variations within their respective training sets. However, no solid examples or discussions are provided further. This initial idea was significantly improved in~\cite{flamary2014learning}, in which the outer problem optimizes a proxy of the generalization error over all tasks with respect to a task similarity matrix and the inner problem estimates the parameters of each task assuming the task similarity matrix is known.

\section{Graph guided MTL}


\subsection{Bilevel multi-tasking linear regression with graph smoothing}\label{sec:bilevel}
We consider the problem of finding regression models $\{w_i\}$ for $n$ tasks, with input/output data $\{(X_i,y_i)\}_{i=1}^{n}$, where $X_i \in R^{N_i \times d}$, $y_i \in R^{N_i \times 1}$ and $d$ is the feature size, while $N_i$ is the number of samples for the $i$th task\footnote{In the following we assume for simplicity $N_i=N$ for all tasks, but results extend straightforward.}. We split the data into validation $\{(X^\text{val}_i,y^\text{val}_i)\}_{i=1}^{n}$ and training $\{(X^\text{tr}_i,y^\text{tr}_i)\}_{i=1}^{n}$ sets and formulate the problem as a bilevel program:
\begin{subequations} \label{eq:bi_main}
\begin{eqnarray} 
\min_{e} && \sum_{i \in [n]} || X_i^\text{val} w_{e,i} -y_i^\text{val} ||^2 + \xi ||e||^2_2 + \eta ||e||_1 + \gamma H(e) \label{eq:bi_main_out}\\
V_e &=& \arg \min _{V} \sum_{i \in [n]} || X_i^\text{tr} w_i -y_i^\text{tr} ||^2 + \frac1{2} \lambda \tr { V^T L_e V } \; , \label{eq:bi_main_in}
\end{eqnarray}
\end{subequations}
where $V= [w_1^T,\dots,w_n^T]^T$ is the models' vectors, $L_{e} = \sum_{ij \in G} e_{ij} (d_i-d_j)(d_i-d_j)^T =  E  \diag{e} E^T$ is the Laplacian matrix defined using the incident matrix $E$, $e=\mvec{[e_{ij}]}$ is the edge weight vector with $[e_{ij}]$ being the adjacent matrix, and $d_i$ is the discrete indicator vector which is zero everywhere except at the $i$-th entry. We use $[n]$ for the set $\{1,\dots,n\}$. The regularization term in the inner problem is the {\it Dirichlet energy} \cite{belkin_laplacian_2002}
\begin{equation} \label{eq:energy}
\tr { V^T L_e V } = \sum_{ij \in G} e_{ij}||w_i-w_j||_2^2 \; ,
\end{equation}
where $G$ is the graph whose Laplacian matrix is $L_e$. $H(e) = - \sum_{ij \in G} ( |e_{ij}| \ln |e_{ij}| - |e_{ij}|)$  is the un-normalized entropy of the edge values.

{ The inner problem (model learning) aims at finding the optimal model for a given structure (i.e. graph), while the outer problem (structure learning)} aims at minimizing a cost function that includes two terms: (1) the learned model's accuracy on the validation data, and (2) the sparseness of the graph. We capture the sparseness of the graph with three terms: (a) the $\ell_2^2$ norm of the edge values, measuring the energy of the graph, (b) the $\ell_1$ norm measuring the sparseness of the edges, and (c) $H(e)$ measuring the entropy of the edges. In the experiment, we limit the edges to have values in the interval $[0,1]$, which can be interpreted as a relaxation of the \textit{mixed integer non-linear programming} problem when $e_{ij} \in \{0,1\}$ as defined in Eq.\ref{eq:bi_main}. The advantage of formulating the MTL learning as a bilevel program (Eq.\ref{eq:bi_main}) is the ability to derive a closed-form solution for the hyper-gradient (see Thm.\ref{th:hp}). Moreover, for a proper choice of the regularization parameter ($\ell_1$), all edge weights have a closed-form solution (see Thm.\ref{th:sol}). For the general case, we propose a gradient descent algorithm (Alg.\ref{alg:GGMTL}). 
Entropy regularization term has superior sparsification performance to the $\ell_1$ norm regularization \cite{huang_sparse_2019}, thus, the latter can be ignored during hyper-parameter search to reduce the search space at the expense of a improved flexibility.



For simplicity, we define the functions:
\begin{subequations} \label{eq:bi_proxy}
\begin{eqnarray}
f(V_e,e) &=& \sum_{i \in [n]} || X_i^\text{val} w_{e,i} -y_i^\text{val} ||^2 + \xi ||e||^2_2 + \eta ||e||_1 + \gamma H(e)  \label{eq:bi_outer} \\
g(V,e) &=& \sum_{i \in [n]} || X_i^\text{tr} w_i -y_i^\text{tr} ||^2 + \frac1{2} \lambda \tr { V^T L_e V } \; ,\label{eq:bi_inner}
\end{eqnarray}
\end{subequations}
which allow us to write the bilevel problem in the compact form:

\begin{equation} \label{eq:bi2}
\min_{e}  f(V_e,e) ~ \text{s.t.} ~ V_e = \arg \min _{V} g(V,e) \;.
\end{equation}

{ The proposed formulation optimally selects the sparser graph among tasks that provides the best generalization performance on the validation dataset.}

\begin{algorithm}
	\SetNoFillComment
	\SetKwInOut{Input}{Input}
	\SetKwInOut{Output}{Output}
	\Input{$\{X_t, y_t\}$ for $t=\{1, 2, ..., n\}$,  $\xi, \eta,\lambda, \nu$}
	\Output{$V= [w_1^T, ..., w_n^T]^T, L_e$}
	\For{$i\gets1$ \KwTo $n$}{
		Solve $w_i$ by Linear Regression on $\{X_i, y_i\}$
	}
	Construct $k$-nearest neighbor graph $G$ on $V$\;
	Construct $E$ the incident matrix of $G$\;
	\tcp{validation-training split}
	$\{X^\text{tr}_t, y^\text{tr}_t\},\{X^\text{val}_t, y^\text{val}_t\} \gets \text{split}( \{X_t, y_t\})$  \; 		
	\While{not converge}{
		\tcp{compute hyper-gradient}
		compute $\dd _{e} f(V_{e^{(t)}},e^{(t)})$ using Eq. (\ref{eq:hg}), (where, with $\ell_2$ norm,  Eq. (\ref{eq:hg}) is computed using $e=e \circ l(V)$ of Eq.\ref{eq:elle} and alternating with solution of Eq.\ref{eq:bi_ccmtl_sub} (given by Eq.\ref{eq:sol_inner} or in Thrm.\ref{th:hp}).\;
		\tcp{edges' values update}		
		Update $e$: $e^{(t+1)} = [ e^{(t)}+\nu \dd _{e} f(V_{e^{(t)}},e^{(t)})]_{+}$\;
	}
	\tcp{Train on the full datasets with alternate optimization}
	Solve Eq.\ref{eq:bi_ccmtl} on $\{X_t, y_t\}$\;
	\Return $V,L_e$\;
	\caption{GGMTL: $\ell_2$-GGMTL, $\ell^2_2$-GGMTL}
	\label{alg:GGMTL}
\end{algorithm}

\subsection{The $\ell_2$ norm-square regularization $\ell_2^2$-GGMTL algorithm} \label{sec:l22-GGMTL}
We propose an iterative approach that computes the hyper-gradient of $f(V_e,e)$ (eq.\ref{eq:bi_outer}) with respect to the graph edges (the hyper-parameters); this hyper-gradient is then used for updating the hyper-parameters based on the gradient descend method, i.e.,
\begin{equation} \label{eq:hg_update}
e^{(t+1)} = e^{(t)}+\nu \dd _{e} f(V_{e^{(t)}},e^{(t)}) \; ,
\end{equation}
where $\dd _{e}$ is the hyper-gradient and $\nu$ is the learning rate. Algorithm Alg.\ref{alg:GGMTL} depicts the structure of the GGMTL learning method, where $[x]_{+}=\max(0,x)$. 
The stopping criterion is evaluated on the convergence of the validation and training errors. As a final step, the tasks's models are re-learned on all training and validation data based on the last discovered edge values. 

\subsection{The $\ell_2$ norm regularization $\ell_2$-GGMTL algorithm}\label{sec:l2-GGMTL}
The energy smoothing term Eq.\ref{eq:energy} in the inner problem of Eq.\ref{eq:bi_main} is a quadratic term.
However, if two models are unrelated, but connected by an erroneous edge, this term grows quadratically dominating the loss. To reduce this undesirable effect, a term proportional to the distance can be achieved using not-squared $\ell_2$ norm. Therefore, we extend the inner problem of Eq.(\ref{eq:bi_main}) of the previous model to become:
\begin{eqnarray} \label{eq:bi_ccmtl}
\arg \min_V g(V,e) &=& \arg \min_V  \sum_{i \in [n]} || X_i^\text{tr} w_i -y_i^\text{tr} ||^2 + \frac1{2} \lambda  \sum_{ij \in G} e_{ij}||w_i-w_j||_2 \; ,
\end{eqnarray}
where the regularization term in the inner problem is the non-squared $\ell_2$. This can be efficiently solved using alternating optimization \cite{he_efficient_2019}, by defining the vector of edges' multiplicative weights $l = \mvec{[l_{ij}]}=l(V)$ such that:
\begin{equation} \label{eq:elle}
l_{ij} = 0.5/{ || w_i-w_j ||_2} \; .
\end{equation}
We can now formulate a new optimization problem equivalent to Eq.\ref{eq:bi_ccmtl}
\begin{subequations} \label{eq:bi_ccmtl_sub}
\begin{eqnarray} 
V_e(l) &=& \arg \min_V g(V|e,l) \\
&=& \arg \min_V \sum_{i \in [n]} || X_i^\text{tr} w_i -y_i^\text{tr} ||^2 + \frac1{2} \lambda  \tr { V^T L_{e \circ l} V } + \frac1{4} l^{-\circ }
\end{eqnarray}
\end{subequations}
where $\circ$ is the element-wise product, $L_{e \circ l}$ is the Laplacian matrix whose edge values are the element-wise product of $e$ and $l$ ($e \circ l$), while  $ l^{-\circ } $ is a short notation for the element-wise inverse of $l$. Having fixed $l$, the last term of Eq.\ref{eq:bi_ccmtl_sub} can be ignored, while optimizing the inner problem w.r.t. $V$.
The modified algorithm Alg.\ref{alg:GGMTL} ($\ell_2$-GGMTL), which can also be found in the supplementary material (Alg.\ref{alg:l2-GGMTL}),
uses alternate optimization between the closed-form solution in Eq.\ref{eq:elle} and 
the solution of Eq.\ref{eq:bi_ccmtl_sub} over $V$.

\subsection{Hyper-gradient} \label{sec:hg}
The proposed method (Alg.\ref{alg:GGMTL}) is based on the computation of the hyper-gradient of Eq.\ref{eq:bi_main}. This hyper-gradient has a closed-form as defined by Thm.\ref{th:hp} and can be computed efficiently. 

\begin{theorem} \label{th:hp}
	The hyper-gradient of problem of Eq.\ref{eq:bi_main_out} is
	\begin{eqnarray} \label{eq:hg}
	\dd _{e} f(V_e,e) 	&=&   \xi  e +  \eta \sign{e}  - \gamma \sign{e} \circ  \ln{e} \nonumber \\
	&& - \lambda(V^T\otimes I_m)   (B^T \otimes I_d)  A^{-T}  X^{\text{val},T} ( X^\text{val} V -Y^\text{val})
	\end{eqnarray}
	where $B = [b_{11}b_{11}^T , \dots , b_{nn}b_{nn}^T ] \in R^{n \times nm}$ and $B$ is build with only the $m$ non-zero edges (i.e. $|\{ij| ij \in G\}| = m$).. The other variables are $b_{ij} =(d_i-d_j) \in R^{n \times 1}$, $A = \lambda L_{e} \otimes I_d +  X^TX \in R^{dn \times dn} $ and $V = A^{-1}  X^TY$, $ L_{e} = \sum_{ij} e_{ij} b_{ij}b_{ij}^T$, $V = [w_1^T,\dots,w_n^T]^T \in R^{dn \times 1}$,  $X = \diag{X_1,\dots,X_n} \in R^{Nn \times dn}$, $Y=[y_1,\dots,y_n] \in R^{Nn \times 1}$. $\ln{e}$ is the element wise logarithm of the vector $e$ and $\circ$ is the Hadamard product. $\sign{x}$ is the element-wise sign function of $x$.
\end{theorem}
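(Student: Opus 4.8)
The plan is to exploit the fact that the inner problem is an unconstrained, strictly convex quadratic in $V$, so its minimizer is available in closed form; the hyper-gradient is then just the total derivative of the outer objective $f$, obtained by the chain rule together with an implicit-differentiation term that accounts for the dependence of $V_e$ on $e$.

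First I would pass to the stacked notation of the statement. Collecting the per-task data into $X = \diag{X_1,\dots,X_n}$, $Y=[y_1,\dots,y_n]$ and $V=[w_1^T,\dots,w_n^T]^T$, the Dirichlet energy of Eq.\ref{eq:energy} becomes $\tr{V^T L_e V} = V^T(L_e\otimes I_d)V$, using $L_e=\sum_{ij}e_{ij}b_{ij}b_{ij}^T$ with $b_{ij}=d_i-d_j$ and the Kronecker identity $(b_{ij}^T\otimes I_d)V = w_i-w_j$. The inner objective $g$ is therefore $\|XV-Y\|^2+\tfrac12\lambda V^T(L_e\otimes I_d)V$, and setting $\nabla_V g=0$ gives the normal equations $A V_e = X^TY$ with $A=X^TX+\lambda L_e\otimes I_d$, hence $V_e=A^{-1}X^TY$; since $A\succ 0$ this stationary point is the unique minimizer, so the $\arg\min$ map is well defined and differentiable. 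I would absorb the overall factor of two coming from the least-squares terms into the constants $\lambda,\xi$ and the step size $\nu$, consistent with the coefficients that appear in Eq.\ref{eq:hg}.

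Next, because the validation loss depends on $e$ only through $V_e$ while the three regularizers depend on $e$ directly, the chain rule splits the hyper-gradient into an explicit and an implicit part:
\[
\frac{\dd{f}}{\dd{e}} = \left.\frac{\partial f}{\partial e}\right|_{V} + \left(\frac{\partial V_e}{\partial e}\right)^{\!T}\frac{\partial f}{\partial V}.
\]
The explicit term is a routine differentiation of the regularizers: $\partial_e(\xi\|e\|_2^2)=\xi e$, $\partial_e(\eta\|e\|_1)=\eta\sign{e}$, and for the entropy, since $\tfrac{d}{dx}(|x|\ln|x|-|x|)=\sign{x}\ln|x|$, one gets $\gamma\,\partial_e H(e)=-\gamma\,\sign{e}\circ\ln{e}$ (using $e\ge 0$ on the feasible box $[0,1]$). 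This reproduces the first three summands of Eq.\ref{eq:hg}.

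The implicit term is the heart of the argument. Differentiating $V_e=A^{-1}X^TY$ and using $\partial A/\partial e_{ij}=\lambda(b_{ij}b_{ij}^T)\otimes I_d$ gives $\partial V_e/\partial e_{ij}=-\lambda A^{-1}(b_{ij}b_{ij}^T\otimes I_d)V_e$. Contracting with $\partial f/\partial V = X^{\text{val},T}(X^\text{val}V_e-Y^\text{val})$ and using the symmetry of $b_{ij}b_{ij}^T\otimes I_d$ yields, for each edge, the scalar $-\lambda\, V_e^T (b_{ij}b_{ij}^T\otimes I_d)\,A^{-T} X^{\text{val},T}(X^\text{val}V_e-Y^\text{val})$. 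The remaining step is to stack these $m$ per-edge scalars into a single vector: horizontally concatenating the blocks $b_{ij}b_{ij}^T$ over the $m$ present edges into $B\in R^{n\times nm}$ lets me apply all of them simultaneously through $B^T\otimes I_d$, after which $V^T\otimes I_m$ performs the edge-wise contraction against $V_e$, producing the final term of Eq.\ref{eq:hg}. I expect this last assembly to be the main obstacle: one must keep the Kronecker-product ordering and the edge/coordinate indexing consistent (essentially a commutation-matrix bookkeeping) so that $(V^T\otimes I_m)(B^T\otimes I_d)$ reproduces exactly the collection of per-edge inner products $V_e^T(b_{ij}b_{ij}^T\otimes I_d)p$ with $p=A^{-T}X^{\text{val},T}(X^\text{val}V_e-Y^\text{val})$, rather than a permuted version of them.
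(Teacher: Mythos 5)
Your proposal is correct and follows essentially the same route as the paper's proof: the closed-form inner solution $V_e = A^{-1}X^TY$, implicit differentiation giving $\partial V_e/\partial e_{ij} = -\lambda A^{-1}\bigl((d_i-d_j)(d_i-d_j)^T\otimes I_d\bigr)V_e$, the split of the total derivative into the explicit regularizer gradients ($\xi e$, $\eta\sign{e}$, $-\gamma\sign{e}\circ\ln{e}$) plus the implicit validation-loss term, and the same Kronecker stacking into $(V^T\otimes I_m)(B^T\otimes I_d)A^{-T}X^{\text{val},T}(X^{\text{val}}V-Y^{\text{val}})$. The only cosmetic difference is that the paper obtains the derivative of the inverse via a Sherman--Morrison perturbation followed by a limit (supported by Thm.\ref{th:directional}, which is precisely the identity $\dd{(A^{-1})}=-A^{-1}(\dd{A})A^{-1}$ that you apply directly), and your explicit remark about absorbing the factor of two from the least-squares terms is a point the paper glosses over.
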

We notice that $B$ and $A$ in Thm.\ref{th:hp} are sparse matrices. This leads to efficient computation of the hyper-gradient, as shown in Thm.\ref{th:complex}. All proofs are reported in the Supplementary Material (Sec.\ref{sec:proofs}).


\subsection{Closed-form hyper-edges}
Alternative to applying gradient descent methods using the hyper-gradient updates, the optimal edges' values can also be directly computed. 
We compute $e$ (the edge vector) as the solution of $\dd _{e} f(V_e,e) =0$, since the optimal solution has zero hyper-gradient. 
In the case when $\ell_1$ is the only term that has a non-zero weight in Eq.\ref{eq:bi_main_out}, the edge vector $e$ has a closed-form solution as proven in Thm.\ref{th:sol}.

\begin{theorem} \label{th:sol}
	Let suppose $\xi=0, \gamma=0,  \eta \ne 0$, then the hyper-edges of problem of Eq.\ref{eq:bi_main} is the solution of
	\begin{eqnarray} \label{eq:hyper-edges}
	U e & =& v
	\end{eqnarray}
	where $U=(z^T \otimes (E \otimes I_d )) K$,  $K = [\mvec {\diag{d_0} \otimes I_d},\dots, \mvec{\diag{d_{m-1}} \otimes I_d}] \in R^{m^2d^2 \times m}$, where $d_i \in R^{m \times 1}$ is the indicator vector and $u = M^{-1} 1_m$, $v = 1/\eta C  -1/\lambda X^TX u$, $z = (E^T  \otimes I_d ) u $, $M=(V^T\otimes I_m)   (B^T \otimes I_d) \in R^{m \times dn}$  , $C = X^{\text{val},T} ( X^\text{val} V -Y^\text{val}) \in R^{nd \times 1}$ and $E,V,B$ as in Thm.\ref{th:hp}.
\end{theorem}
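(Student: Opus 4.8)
The plan is to start from the closed-form hyper-gradient of Thm.\ref{th:hp} and impose the stationarity condition $\dd_{e} f(V_e,e)=0$ under the stated hypotheses. First I would substitute $\xi=0$ and $\gamma=0$, which annihilates the $\ell_2^2$ and entropy contributions and leaves only the term $\eta\sign{e}$ together with the data-fit term $-\lambda M A^{-T}C$, where I abbreviate $M=(V^T\otimes I_m)(B^T\otimes I_d)$ and $C=X^{\text{val},T}(X^\text{val}V-Y^\text{val})$. Since the edges are constrained to the non-negative unit interval, I would use $\sign{e}=1_m$, and since both $L_e$ and $X^TX$ are symmetric, $A=\lambda L_{e}\otimes I_d+X^TX$ is symmetric, so $A^{-T}=A^{-1}$. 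The stationarity condition then reads $\eta\,1_m=\lambda M A^{-1}C$.

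Next I would isolate the $e$-dependence. The equation $\eta\,1_m=\lambda M A^{-1}C$ is satisfied by the sufficient condition $A^{-1}C=\tfrac{\eta}{\lambda}u$ with $u=M^{-1}1_m$, because then $M A^{-1}C=\tfrac{\eta}{\lambda}M M^{-1}1_m=\tfrac{\eta}{\lambda}1_m$. Multiplying through by $A$ gives $C=\tfrac{\eta}{\lambda}A u$; substituting $A=\lambda L_{e}\otimes I_d+X^TX$ and collecting terms yields $(L_{e}\otimes I_d)u=\tfrac1{\eta}C-\tfrac1{\lambda}X^TX u$, whose right-hand side is precisely $v$. At this stage $e$ enters only through $L_e$ on the left.

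The final step is to linearize $(L_{e}\otimes I_d)u$ in $e$. Using $L_e=E\diag{e}E^T$ and the mixed-product rule, $L_{e}\otimes I_d=(E\otimes I_d)(\diag{e}\otimes I_d)(E^T\otimes I_d)$; setting $z=(E^T\otimes I_d)u$ reduces the left side to $(E\otimes I_d)(\diag{e}\otimes I_d)z$. I would then write $\diag{e}=\sum_k e_k\diag{d_k}$ with $d_k$ the edge indicator vectors, so that $(E\otimes I_d)(\diag{e}\otimes I_d)z=\sum_k e_k\,(E\otimes I_d)(\diag{d_k}\otimes I_d)z$, and apply the identity $\mvec{AXB}=(B^T\otimes A)\mvec{X}$ to each summand. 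Carrying $E\otimes I_d$ inside the vec operation and regrouping via the mixed-product rule produces the common prefactor $z^T\otimes(E\otimes I_d)$, while stacking the $\mvec{\diag{d_k}\otimes I_d}$ as columns gives $K$ and the coefficients $e_k$ assemble into $e$. This yields $(L_{e}\otimes I_d)u=Ue$ with $U=(z^T\otimes(E\otimes I_d))K$, hence $Ue=v$.

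I expect the main obstacle to be the Kronecker/vec bookkeeping in the last step: the order in which one applies $\mvec{\cdot}$ and absorbs $E\otimes I_d$ determines whether one lands on exactly $z^T\otimes(E\otimes I_d)$ or on an equivalent but differently grouped operator, and keeping the identity sizes consistent in $\mvec{AXB}=(B^T\otimes A)\mvec{X}$ is delicate. A secondary subtlety is the step $u=M^{-1}1_m$: since $M\in R^{m\times dn}$ is rectangular in general, this should be read as a (pseudo-)inverse selecting a particular solution. One should also note that $V$, and therefore $M$, $C$, $u$, $z$, and $v$, all depend on $e$ through the inner problem, so $Ue=v$ is the stationarity condition rewritten as a linear system in $e$ for the converged model $V$ rather than a wholly explicit formula.
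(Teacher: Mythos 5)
Your proof is correct and follows essentially the same route as the paper's: set the hyper-gradient of Thm.\ref{th:hp} to zero with $\xi=\gamma=0$ and $\sign{e}=1_m$, pass to the sufficient condition $Au=(\lambda/\eta)C$ via $u=M^{-1}1_m$, rearrange to $(L_e\otimes I_d)u=v$, and linearize in $e$ through $L_e=E\diag{e}E^T$ and the vec/Kronecker identity, yielding $Ue=v$. The caveats you flag explicitly --- that $A^{-T}=A^{-1}$ by symmetry, that $M^{-1}$ must be read as a pseudo-inverse so the condition is sufficient rather than necessary, and that $V$ (hence $u$, $z$, $v$) still depends on $e$ so the system is a stationarity rewriting rather than a fully explicit formula --- are all points the paper's proof uses silently, so your version is, if anything, the more careful statement of the same argument.
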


\subsection{Complexity analysis}
GGMTL algorithm computes the tasks' models kNN graph, whose computational complexity can be reduced from $O(n^2)$ to $O(nd \ln n)$\cite{arya1998optimal} \footnote{or, using Approximate Nearest Neighbour (ANN) methods, to $O(nd)$
\cite{hyvonen2015fast}
}, while one iteration of GGMTL algorithm computes the hyper-gradient. A naive implementation of this step requires inverting a system of dimension $nd \times nd $, whose complexity is $O((dn)^{3})$.  It would thus come to surprise that the actual computational complexity of the GGMTL method is $O(nd \ln n + (nd)^{1.31} +dn^2)$, where the second two terms follow from Thm.\ref{th:complex}, while $O(dn^2)$ is the matrix-vector product which can be performed in parallel. 

\begin{theorem}  \label{th:complex}
The computational complexity of solving hyper-gradient of Thrm.~\ref{th:hp} is $O((nd)^{1.31}+dn^2)$ (or $O((nd)\ln ^c(nd) +dn^2)$, with $c$ constant).  
\end{theorem}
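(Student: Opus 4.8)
The plan is to evaluate the closed-form hyper-gradient of Eq.\ref{eq:hg} without ever instantiating any of the large Kronecker-structured matrices, and instead to propagate a single vector through the expression from right to left, charging each stage against the sparsity of the factor it meets. I would first dispose of the three regularization terms $\xi e+\eta\sign{e}-\gamma\sign{e}\circ\ln{e}$: these are element-wise operations on the $m$-dimensional edge vector and cost $O(m)=O(n)$ for the bounded-degree kNN graph, so they are harmless.

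The substantive cost lies in $-\lambda(V^T\otimes I_m)(B^T\otimes I_d)A^{-T}X^{\text{val},T}(X^\text{val}V-Y^\text{val})$, and the two facts I would establish up front are the sparsity and symmetry of its factors. The matrix $A=\lambda L_e\otimes I_d+X^TX$ is symmetric positive definite and sparse: $L_e$ is the Laplacian of the bounded-degree kNN graph, so $L_e\otimes I_d$ carries $O(nd)$ nonzeros, while $X^TX=\diag{X_1^TX_1,\dots,X_n^TX_n}$ is block diagonal with $O(nd^2)$ nonzeros. Likewise $B$ is sparse, since each block $b_{ij}b_{ij}^T$ has only four nonzero entries and there are $m$ of them, giving $O(m)$ nonzeros. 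Symmetry yields $A^{-T}=A^{-1}$, so a single factorization (or preconditioner) of $A$ serves both the evaluation of $V=A^{-1}X^TY$ from Thm.\ref{th:hp} and the back-solve below, reducing the inverse applications to two solves reusing a shared factor.

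I would then cost the pipeline stage by stage. Forming the validation residual $X^\text{val}V-Y^\text{val}$ and then $C=X^{\text{val},T}(X^\text{val}V-Y^\text{val})$ uses only the block-diagonal structure of $X^\text{val}$ and is subsumed by the remaining terms. The back-solve $A^{-1}C$ is the crux: because $A$ is sparse and symmetric positive definite I would invoke a fast sparse solver, obtaining either $O((nd)^{1.31})$ through fast-matrix-multiplication-based sparse elimination or $O((nd)\ln^c(nd))$ through a nearly-linear SDD-type solver exploiting the Laplacian-plus-block-diagonal pattern. The two Kronecker contractions are handled by the identity $(P\otimes Q)\mvec{M}=\mvec{QMP^T}$, so that no Kronecker matrix is ever formed: $(B^T\otimes I_d)p$ becomes the product of a $d\times n$ reshaping of $p$ with the sparse $B$, costing $O(md)=O(nd)$, whereas the final contraction $(V^T\otimes I_m)q$ becomes a single $m\times(nd)$ by $(nd)\times1$ multiplication, costing $O(mnd)=O(dn^2)$ since $m=O(n)$; this is exactly the stated $dn^2$ term and parallelizes across edges.

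Summing the stages, the dominant contributions are the sparse solve and the last contraction, yielding $O((nd)^{1.31}+dn^2)$, respectively $O((nd)\ln^c(nd)+dn^2)$. The main obstacle I anticipate is not the bookkeeping but justifying the solve exponent: although $\lambda L_e\otimes I_d$ is diagonally dominant, the Gram blocks $X_i^TX_i$ break diagonal dominance, so one must argue that $A$ still lies in the class for which the fast solver attains the claimed bound and that the fill-in produced by eliminating along the kNN-graph sparsity pattern does not push the exponent past $1.31$. Pinning down $m=O(n)$ under the bounded-degree kNN construction, and verifying that every intermediate is accessed as a matrix-vector product rather than as a materialized Kronecker operator, are the remaining points to secure.
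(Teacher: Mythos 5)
Your pipeline is essentially the paper's own proof: the same sparsity accounting for $A=\lambda L_e\otimes I_d+X^TX$ (the Laplacian part contributing $O(dm)$ nonzeros and the block-diagonal Gram part $O(nd^2)$), the same observation that with $m=kn$ the contraction $(V^T\otimes I_m)(B^T\otimes I_d)c$ costs $O(dnm)=O(dn^2)$, and the same pair of solver bounds for the linear solve in $A$. Your explicit use of the $\operatorname{vec}$ reshaping identity to avoid materializing the Kronecker factors is a cleaner statement of what the paper does implicitly, and is a genuine (if minor) improvement in rigor.

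The point you leave open, however, is exactly the crux. Both claimed exponents come from the Spielman--Teng line of solvers for sparse, symmetric, \emph{diagonally dominant} (SDD) systems \cite{spielman_solving_2004,spielman_nearly-linear_2012} --- the $O((nd)^{1.31})$ bound is from the 2004 SDD solver, not from fast-matrix-multiplication-based sparse elimination, so that attribution in your argument should be corrected; generic sparse Cholesky on the kNN pattern gives no such exponent. The paper dispatches the applicability question with its auxiliary Theorem~\ref{th:SSD}, which asserts that $A$ is SDD and is then cited inside the proof of Theorem~\ref{th:complex}. Your suspicion about this step is well founded: the paper's proof of Theorem~\ref{th:SSD} only notes that $\lambda L_e\otimes I_d$ is Laplacian-like and that the blocks $X_i^TX_i$ are symmetric and sparse, and from this concludes diagonal dominance --- but a Gram matrix $X_i^TX_i$ is symmetric positive semidefinite, not diagonally dominant in general, so the conclusion does not follow as stated. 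To close your proof you would need either an additional assumption making each $X_i^TX_i$ diagonally dominant (e.g., suitably whitened or normalized features), or a solver whose guarantee extends beyond the SDD class to such Laplacian-plus-PSD-block systems. As written, your argument and the paper's share this same unresolved step; the difference is that you flag it explicitly, while the paper hides it behind a lemma whose proof is inadequate.
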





\section{Experimental results} \label{sec:experiment}
We evaluate the performance of GGMTL against four state-of-the-art multi-task learning methodologies (namely MTRL~\cite{zhang2010convex}, MSSL~\cite{gonccalves2016multi}, BSML~\cite{goncalves2019bayesian}, and CCMTL~\cite{he_efficient_2019}) on both synthetic data and real-world applications. Among the four competitors, MTRL learns a graph covariance matrix, MSSL and BSML directly learn a graph precision matrix which can be interpreted as a graph Laplacian. By contrast, CCMTL does not learn task relationship, but uses a fixed $k$-NN graph before learning model parameters\footnote{We performed grid-search hyper-parameter search for all methods}.

\subsection{Measures} 
\subsubsection{Synthetic dataset measure for veracity}
\label{sec:veradicity}
To evaluate the performance of the proposed method on the synthetic dataset, we propose a reformulation of the measures: accuracy, recall and precision by applying the
\L{}ukasiewicz fuzzy T-norm $\top(a,b)= \max(a+b-1,0)$ and T-conorm $\bot(a,b)=\min(a,b)$ \cite{Klement:2000:TN}, where $a,b$ represent truth values from the interval $[0,1]$.
Given the ground truth graph $G_1$ and the predicted graph $G_2$ (on $n$ tasks) with proper adjacency matrices $A_1$ and $A_2$ (i.e., $a^{(1)}_{i,j}, a^{(2)}_{i,j} \in [0,1]$ for all $0\le i,j \le n$), we define:
\begin{eqnarray*}
\text{recall} &= \frac{\sum_{0\le i,j \le n} \top(a^{(1)}_{i,j},a^{(2)}_{i,j})}{\sum_{0\le i,j \le n} a^{(1)}_{i,j} } , \\
\text{precision}  &= \frac{\sum_{0\le i,j \le n} \top(a^{(1)}_{i,j},a^{(2)}_{i,j})}{\sum_{0\le i,j \le n} a^{(2)}_{i,j} } , \\
\text{accuracy}   &= 1- \frac{\sum_{0\le i,j \le n} \oplus(a^{(1)}_{i,j},a^{(2)}_{i,j})}{n^2}
\end{eqnarray*}
s.t $\oplus(a,b) = \top(\bot(a,b),1-\top(a,b))$ is the fuzzy XOR, see \cite{bedregal2009xor}. The definition of the $F_{1}$ score remains unchanged as the harmonic mean of precision and recall. These measures inform about the overlap between a predicted (weighted) graph and a ground truth sparse structure, in a similar way to imbalanced classification. An alternative and less informative approach would be to compute Hamming distance between the two adjacency matrices (ground truth and induced graph), provided they are both binary.
\subsubsection{Regression performance} The generalization performance is measured in terms of the Root Mean Square Error (RMSE) averaged over tasks. 

\begin{table}
	\caption{Results on the synthetic data of each of GGMTL, BMSL, MMSL and CCMTL (KNN) in terms of the measures described in Subsec.
	\ref{sec:veradicity}.}
	\begin{center}
	\begin{tabular}{l|p{.7cm}|l|l|l|l}
		&&BMSL& MSSL&	CCMTL &	GGMTL\\
\hline
\multirow{3}{*}{}accuracy& line& $0.384\pm4.1e^{-2}$&	$0.180\pm1.1e^{-2}$&	$\underline{0.631}\pm2.1e^{-2}$&	$\underline{ \bf 0.648}\pm3.3e^{-2}$\\
& tree& $0.388\pm5.2e^{-2}$&	$0.141\pm1.1e^{-2}$&	$\underline{0.722}\pm1.6e^{-2}$&	$\underline{\bf 0.770}\pm1.2e^{-2}$\\
& star& $\underline{0.581}\pm8.4e^{-2}$&	$\underline{\bf 0.726}\pm2.9e^{-2}$&	$0.405\pm3.1e^{-2}$&	$0.460\pm7.8e^{-2}$\\
\hline
\multirow{3}{*}{}recall& line& $0.688\pm5.6e^{-2}$&	$\underline{\bf 0.958}\pm1.5e^{-2}$&	$0.288\pm4.4e^{-2}$&	$\underline{0.726}\pm1.0e^{-1}$\\
& tree& $0.653\pm6.3e^{-2}$&	$\underline{\bf 0.946}\pm1.3e^{-2}$&	$0.390\pm2.5e^{-2}$&	$\underline{0.790}\pm7.8e^{-2}$\\
& star& $0.318\pm1.3e^{-1}$&	$0.311\pm1.1e^{-1}$&	$\underline{\bf 0.706}\pm7.2e^{-2}$&	$\underline{0.664}\pm1.6e^{-1}$\\
\hline
\multirow{3}{*}{}precision& line& $0.100\pm8.1e^{-4}$&	$\underline{0.100}\pm1.5e^{-4}$&	$0.083\pm7.1e^{-3}$&	$\underline{\bf 0.175}\pm2.7e^{-2}$\\
& tree& $0.065\pm8.4e^{-4}$&	$0.065\pm9.9e^{-5}$&	$\underline{0.092}\pm2.6e^{-3}$&	$\underline{\bf 0.185}\pm1.6e^{-2}$\\
& star& $0.149\pm4.7e^{-2}$&	$\underline{\bf 0.238}\pm5.4e^{-2}$&	$0.176\pm1.2e^{-2}$&	$\underline{ 0.185}\pm3.1e^{-2}$\\
\hline
\multirow{3}{*}{}F1& line& $0.175\pm2.6e^{-3}$&	$\underline{0.182}\pm3.4e^{-4}$&	$0.129\pm1.3e^{-2}$&	$\underline{\bf 0.282}\pm4.3e^{-2}$\\
& tree& $0.117\pm1.9e^{-3}$&	$0.121\pm1.8e^{-4}$&	$\underline{0.149}\pm4.0e^{-3}$&	$\underline{\bf 0.300}\pm2.6e^{-2}$\\
& star& $0.199\pm6.7e^{-2}$&	$0.266\pm7.3e^{-2}$&	$\underline{0.281}\pm2.1e^{-2}$&	$\underline{\bf 0.288}\pm5.e^{-2}$\\
\hline
\multirow{3}{*}{}RMSE& line& $6.968\pm1.5$&	$4.838\pm7.6e^{-1}$& $\underline{4.342}\pm7.21e^{-1}$ & $\underline{4.342}\pm7.21e^{-1}$\\
& tree& $7.444\pm1.4$&	$4.879\pm1.3$&	$\underline{4.207}\pm1.141$ & $\underline{4.207}\pm1.141$	\\
& star& $4.784\pm2.4e^{-1}$&	$1.616\pm2.7e^{-1}$&	$\underline{0.507}\pm2.25e^{-1}$ & $\underline{\bf 0.300}\pm2.11e^{-1}$	\\
\hline
\end{tabular}
	\end{center}
	\label{tbl:synthetic_data}	
\end{table}

\subsection{Synthetic data}

In order to evaluate the veracity of the proposed method, we generate three synthetic datasets where the underlying structure of the relationship among tasks is known. 
Each task $t$ in these datasets is a linear regression task whose output is controlled by the weight vector $\mathbf{w}_t$.
Each input variable $\mathbf{x}$, for task $t$, is generated $i.i.d.$ from an isotropic multivariate Gaussian distribution, and the output is taken by $y =\mathbf{w}_t^T\mathbf{x}+\epsilon$, where $\epsilon\sim\mathcal{N}(0,1)$.

\begin{figure}
	\centering
	\begin{subfigure}{.34\textwidth}
		\centering
		\includegraphics[width=1.\linewidth,trim=0 0cm 0 1cm, clip]{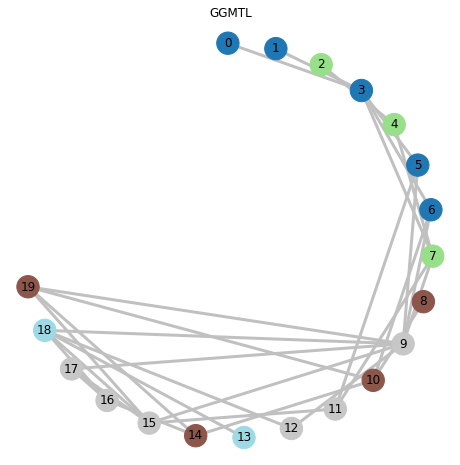}
		\caption{GGMTL (line)}
		\label{fig:GGMTL_line}
	\end{subfigure}%
	\begin{subfigure}{.34\textwidth}
		\centering
		\includegraphics[width=1.\linewidth,trim=0 0cm 0 1cm, clip]{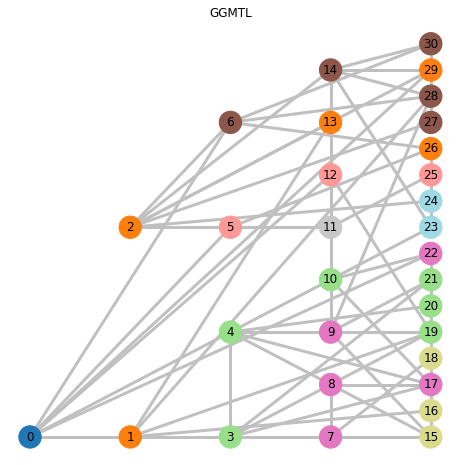}
		\caption{GGMTL (tree)}
		\label{fig:GGMTL_tree}
	\end{subfigure}%
	\begin{subfigure}{.34\textwidth}
		\centering
		\includegraphics[width=1.\linewidth,trim=0 0cm 0 1cm, clip]{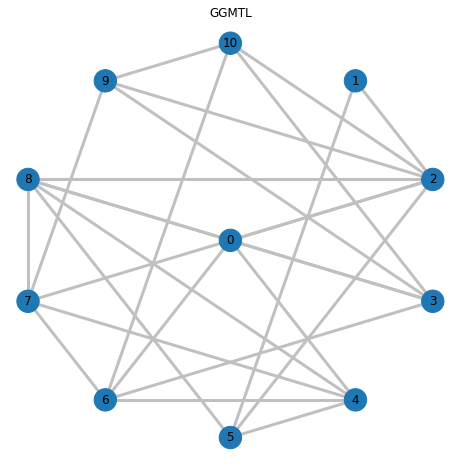}
		\caption{GGMTL (star)}
		\label{fig:GGMTL_star}
	\end{subfigure}%
	\\
	\begin{subfigure}{.34\textwidth}
		\centering
		\includegraphics[width=1.\linewidth,trim=0 0cm 0 1cm, clip]{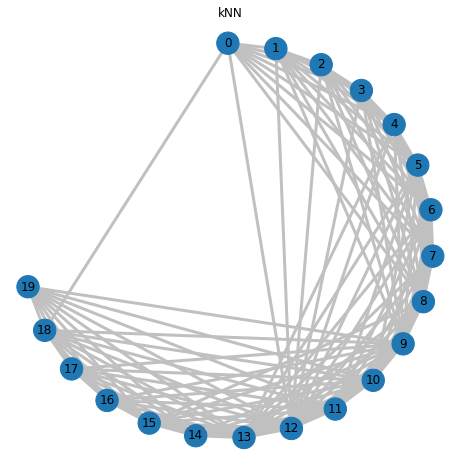}
		\caption{kNN (line)}
		\label{fig:kNN_line}
	\end{subfigure}%
	\begin{subfigure}{.34\textwidth}
		\centering
		\includegraphics[width=1.\linewidth,trim=0 0cm 0 1cm, clip]{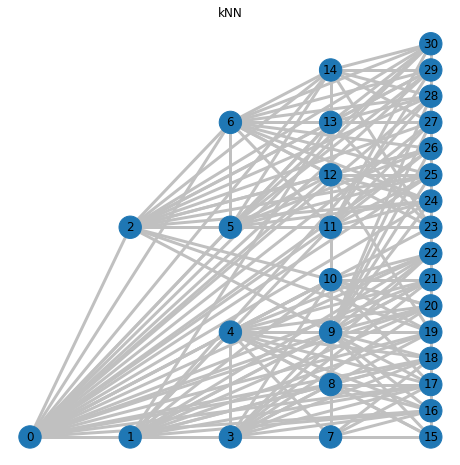}
		\caption{kNN (tree)}
		\label{fig:kNN_tree}
	\end{subfigure}%
	\begin{subfigure}{.34\textwidth}
		\centering
		\includegraphics[width=1.\linewidth,trim=0 0cm 0 1cm, clip]{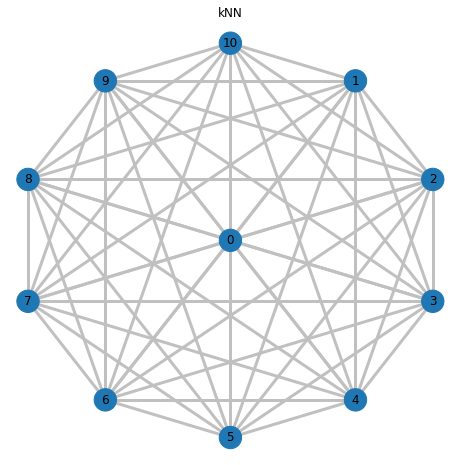}
		\caption{kNN (star)}
		\label{fig:kNN_star}
	\end{subfigure}%
	
	\caption{The discovered graphs by GGMTL and k-NN on the synthetic datasets: \texttt{Line}, \texttt{Tree} and \texttt{Star}.}
	\label{fig:synth}
\end{figure}

The first dataset \texttt{Line} mimics the structure of a line, where each task is generated with an overlap to its predecessor task. This dataset contains $20$ tasks of $30$ input dimensions. The coefficient vector for tasks $t$ is $\mathbf{w}_t =\mathbf{w}_{t-1} + 0.1 \mathbf{u}_{30} \odot \mathbf{b}_{30}$, where $\odot$ denotes the pointwise product, $\mathbf{u}_{30}$ is a $30$-dimensional random vector with each element uniformly distributed between $[0,1]$,  $\mathbf{b}_{30}$ is also a $30$-dimensional binay vector whose elements are Bernoulli distributed with $p=0.7$, and $\mathbf{w}_0\sim\mathcal{N}(\mathbf{1},\mathbf{I}_{30})$.

The tasks of the second dataset \texttt{Tree} are created in a hierarchical manner simulating a tree structure such that 
$\mathbf{w}_t =\mathbf{w}_{t^\prime} + 0.1 \mathbf{u}_{30}\odot \mathbf{b}_{30}$, where 
$\mathbf{w}_{t^\prime}$ is coefficient vector of the parent task ($t^\prime= \floor*{(t-1)/2}$), and $\mathbf{w}_0\sim\mathcal{N}(\mathbf{1},\mathbf{I}_{30})$ is for the root task. In order to create a proper binary tree, we generate $31$ tasks ($30$-dimensional) which creates a tree of five levels.

The distribution of the third dataset's tasks takes a star-shaped structure, hence called \texttt{Star}. The Coefficient vector of each task $t$ is randomly created ($\mathbf{w}_t\sim\mathcal{N}(\mathbf{1},\mathbf{I}_{20}$) for $t\in \{1,\dots,10\}$), and the center one is a mixture of them
$\mathbf{w}_0 = \sum_{t\in\{1,\dots,10\}} \mathbf{w}_t \cdot (e_{2t-1}+e_{2t})$, where $e_i \in \mathbb{R}^{T}$ is an indicator vector with the $i$th element set to $1$ and the others to $0$.
We evaluate the performance of our method in comparison to the other methods on two aspects: \textit{(i)} the ability to learn graphs that recover hidden sparse structures, and \textit{(ii)} the generalization performance. For generalization error we use Root Mean Square Error (RMSE).

Tab.\ref{tbl:synthetic_data} depicts the results of comparing the graphs learned by GGMTL with those of CCMTL, and the covariance matrices of MSSL and BMSL when considered as adjacency matrices, after few adjustments
\footnote{A negative correlation is considered as a missing edge between tasks, hence, negative entries are set to zero. Besides, we normalize each matrix by division over the largest entry after setting the diagonal to zero}. 
It is apparent that GGMTL always achieves the best accuracy except on the \texttt{Star} dataset when MSSL performs best in terms of accuracy; this occurs only because MSSL predicts an extremely sparse matrix leading to poor recall, precision and $F_{1}$ score. Moreover, GGMTL has always the best $F_{1}$ score achieved by correctly predicting the right balance between edges (with 2nd best recall) and sparseness (always best precsion), thus, leading to correctly interpreting and revealing the latent structure of task relations. Besides the quantitative measures, interpretability is also confirmed qualitatively in Fig.\ref{fig:synth} where the discovered edges reveal to a large extent the ground truth structures. The figure also plots the k-NN graph next to that of GGMTL, this shows how graphs of GGMTL pose a refinement of those of k-NN by removing misplaced edges while still maintaining the relevant ones among tasks.
Finally, Tab.\ref{tbl:synthetic_data} also shows that GGMTL commits the smallest generalization error in terms of RMSE with a large margin to BMSL and MSSL.

\begin{table}
\centering
\small
\caption{RMSE (mean$\pm$std) on Parkinson's disease data set over $10$ independent runs on various train/test ratios $r$. Best two performances underlined.}\label{Tab:parkinson}
 \begin{tabular}{lllllll}
\toprule
split & MTRL & MSSL & BMSL & CCMTL & \textbf{GGMTL}  \\
\midrule
$r=0.3$ & $4.147\pm3.038$ & $1.144\pm0.007$ &$1.221 \pm 0.11$ & $\underline{1.037}\pm0.013$ & $\underline{\bf 1.037}\pm0.012$  \\
$r=0.4$ & $3.202\pm2.587$ & $1.129\pm0.011$ &$1.150 \pm 0.1$ & $\underline{\bf 1.017}\pm0.008$ & $\underline{1.019}\pm0.01$ \\
$r=0.5$ & $1.761\pm0.85$ & $1.130\pm0.009$ &$1.110 \pm 0.085$ & $\underline{1.010}\pm0.008$ & $\underline{1.010}\pm0.008$  \\
$r=0.6$ & $1.045\pm0.05$ & $1.123\pm0.013$ &$1.068 \pm  0.036$ & $\underline{\bf 0.998}\pm0.017$ & $\underline{1.000}\pm0.017$  \\
\bottomrule
\end{tabular}
\end{table}

\begin{figure}
	\centering
		\begin{subfigure}{.5\textwidth}
		\centering
		\includegraphics[width=1.\linewidth,trim=0 0cm 0 1cm, clip]{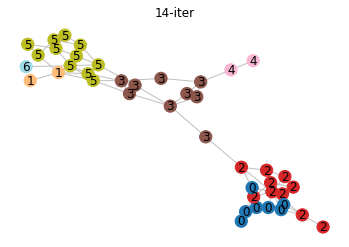}
		\caption{GGMTL (Parkinson)}
		\label{fig:GGMTL_parkinsons}
	\end{subfigure}%
	\begin{subfigure}{.5\textwidth}
		\centering
		\includegraphics[width=1.\linewidth,trim=0 0cm 0 1cm, clip]{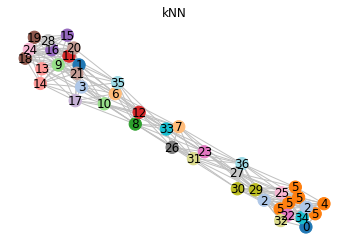}
		\caption{kNN (Parkinson)}
		\label{fig:kNN_parkinsons}
	\end{subfigure}	
	\\
		\begin{subfigure}{.5\textwidth}
		\centering
		\includegraphics[width=1.\linewidth,trim=0 0cm 0 1cm, clip]{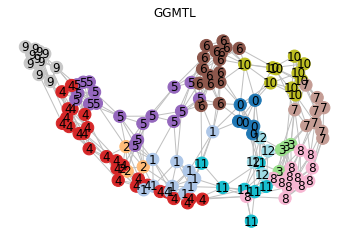}
		\caption{GGMTL (School)}
		\label{fig:GGMTL_school}
	\end{subfigure}%
	\begin{subfigure}{.5\textwidth}
		\centering
		\includegraphics[width=1.\linewidth,trim=0 0cm 0 1cm, clip]{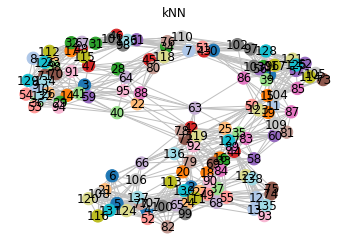}
		\caption{kNN (School)}
		\label{fig:kNN_school}
	\end{subfigure}%
	\\
	\begin{subfigure}{.5\textwidth}
		\centering
		\includegraphics[width=1.\linewidth,trim=0 0cm 0 1cm, clip]{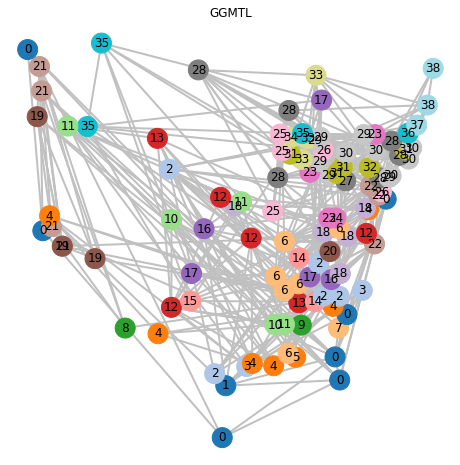}
		\caption{GGMTL (temperature)}
		\label{fig:GGMTL_temp}
	\end{subfigure}%
	\begin{subfigure}{.5\textwidth}
		\centering
		\includegraphics[width=1.\linewidth,trim=0 0cm 0 1cm, clip]{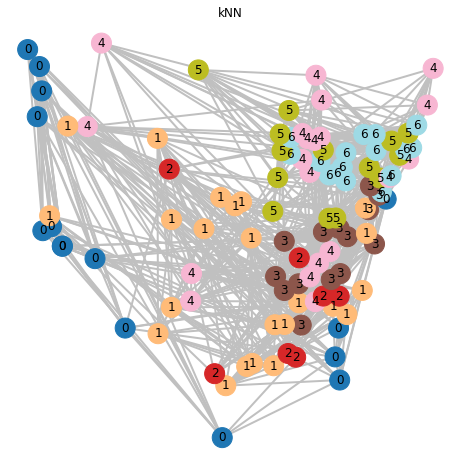}
		\caption{kNN (temperature)}
		\label{fig:kNN_temp}
	\end{subfigure}	
	
	\caption{The discovered graphs by GGMTL and k-NN on the \texttt{Parkinson}, \texttt{School} and \texttt{South} datasets.}
	\label{fig:real_data}
\end{figure}

\subsection{Real-world applications}
\subsubsection{Parkinson's disease assessment.}
\texttt{Parkinson} is a benchmark multi-task regression dataset \footnote{\url{https://archive.ics.uci.edu/ml/datasets/parkinsons+telemonitoring}}, comprising a range of biomedical voice measurements taken from $42$ patients with early-stage Parkinson's disease. For each patient, the goal is to predict the motor Unified Parkinson's Disease Rating Scale (UPDRS) score based $18$-dimensional record: age, gender, and $16$ jitter and shimmer voice measurements. We treat UPDRS prediction for each patient as a task, resulting in $42$ tasks and $5,875$ observations in total.
We compare the generalization performance of GGMTL with that of the other baselines when different ratios of the data is used for training, ratio $r \in\{0.3,0.4,0.5,0.6\}$.
The results depicted in Tab.\ref{Tab:parkinson} show that GGMTL performance is close to that of CCMTL, outperforming MSSL and BMSL.
However, when plotting the learned graphs, see Fig.\ref{fig:real_data}(a) and Fig.\ref{fig:real_data}(b), GGMTL clearly manage to separate patients into a few distinct groups unlike the heavily connected k-NN graph used by CCMTL. Interestingly, these groups are easily distinguished by Markov Clustering \cite{van2000graph} when applied on the learned graph; this very same procedure fails to distinguish reasonable clusters when applied on the K-NN graph (35 clusters were discovered with only one task in each, and one cluster with five tasks).



\subsubsection{Exam score prediction}
\begin{table}
	\centering
	\small
	\caption{RMSE (mean$\pm$std) on School data set over $10$ independent runs on various train/test ratios $r$. Best two performances underlined.}\label{Tab:school}
	\begin{tabular}{c|lllll}
		\toprule
		& MTRL & MSSL & BMSL & CCMTL & \textbf{GGMTL}  \\
		\midrule
$r=0.2$ & ${11.276}\pm0.103$   & $11.727 \pm 0.137$ & $10.430 \pm 0.056$ & $\underline{\bf 10.129}\pm0.038$ & $\underline{10.137}\pm0.034$  \\
$r=0.3$ & ${10.761}\pm0.045$  & $11.060 \pm 0.045$ & $10.223 \pm 0.044$ & $\underline{10.078}\pm0.038$ & $\underline{\bf 10.072}\pm0.037$  \\
$r=0.4$ & ${10.440}\pm0.069$  & $10.632 \pm 0.076$ & $10.084 \pm 0.067$ & $\underline{10.059}\pm0.067$ & $\underline{\bf 10.056}\pm0.068$  \\
$r=0.5$ & ${10.267}\pm0.065$  & $10.437 \pm 0.040$ & $10.048 \pm 0.043$ & $\underline{\bf 9.994}\pm0.052$ & $\underline{9.996}\pm0.051$  \\
		\bottomrule
	\end{tabular}
\end{table}
\texttt{School} is a classical benchmark dataset in Multi-task regression \cite{argyriou2007multi,kumar2012learning,zhang2014regularization}; it consists of examination scores of $15,362$ students from $139$ schools in London. Each school is considered as a task and the aim is to predict the exam scores for all the students. The school dataset is available in the Malsar package \cite{zhou2011malsar}.

Tab.\ref{Tab:school} reports the RMSE on the School dataset. It is noticeable that both GGMTL and CCMTL have similar but dominating performance over the other methods. As with the Parkinson data, Fig.\ref{fig:real_data}(c) and Fig.\ref{fig:real_data}(d) compare the graphs induced by GGMTL and CCMTL(k-NN), and the results of applying Markov clustering on their nodes. These figures show again that graphs induced by GGMTL are easier to interpret and lead to well separated clusters with only few intercluster edges.


\subsubsection{Temperature forecasting in U.S.}
\begin{table}[!hbpt]
	\centering
	\small
	\caption{RMSE (mean$\pm$std) on US America Temperature data. 
	The best two performances are underlined.}\label{Tab:temperature}
	\begin{tabular}{ccccccc}
		\toprule
		Hours & MTRL & MSSL & BMSL & CCMTL & \textbf{GGMTL}  \\
		\midrule
		8 & NA  & $1.794$ & $0.489 $ & $\underline{0.101}$ & $\underline{ 0.101}$  \\
		16 & NA  & $1.643$ & $0.606 $ & $\underline{0.0975}$ & $\underline{ \bf  0.0973}$  \\
		\bottomrule
	\end{tabular}
\end{table}
The \texttt{Temperature} dataset\footnote{\url{https://www.ncdc.noaa.gov/data-access/land-based-station-data/land-based-datasets/climate-normals/1981-2010-normals-data}} contains hourly temperature data for major cities in the United States, collected from $n=109$ stations for $8759$ hours in $2010$. Data is cleaned and manipulated as described in ~\cite{hua2020online}. 
The temperature forecasting with a horizon of $8$ or $16$ hours in advance at each station is model as a task. 
We select the first $20\%$ observations (roughly $5$ weeks) to train and left the remaining $80\%$ observations for test. 
We use previous $30$ hours of temperature as input to the model.
Tab.\ref{Tab:temperature} reports the RMSE of the methods. Learning the graph structure using GGMTL does not impact performance in term of regression error.  
Fig.\ref{fig:real_data}[e-f] shows node clustering on the graph learned on \texttt{Temerature} dataset, where the number of edges is reduced by $60\%$ using GGMTL.

\section{Conclusions}
In this work, we present a novel formulation of joint multi-task and graph structure learning as a bilevel problem, and propose an efficient method for solving it based on a closed-form of hyper-gradient. We also show the interpretability property of the proposed method on synthetic and real world datasets. We additionally analyze the computational complexity of the proposed method. 

\bibliographystyle{plain}
\bibliography{IEEEabrv,references_a,references_b}

\begin{thebibliography}{10}

\bibitem{argyriou2007multi}
Andreas Argyriou, Theodoros Evgeniou, and Massimiliano Pontil.
\newblock Multi-task feature learning.
\newblock In {\em NIPS}, pages 41--48, 2007.

\bibitem{argyriou2008convex}
Andreas Argyriou, Theodoros Evgeniou, and Massimiliano Pontil.
\newblock Convex multi-task feature learning.
\newblock {\em Machine Learning}, 73(3):243--272, 2008.

\bibitem{arya1998optimal}
Sunil Arya, David~M Mount, Nathan~S Netanyahu, Ruth Silverman, and Angela~Y Wu.
\newblock An optimal algorithm for approximate nearest neighbor searching fixed
  dimensions.
\newblock {\em Journal of the ACM (JACM)}, 45(6):891--923, 1998.

\bibitem{bedregal2009xor}
Benjam{\'\i}n~C Bedregal, Renata~HS Reiser, and Gra{\c{c}}aliz~P Dimuro.
\newblock Xor-implications and e-implications: classes of fuzzy implications
  based on fuzzy xor.
\newblock {\em Electronic notes in theoretical computer science}, 247:5--18,
  2009.

\bibitem{belkin_laplacian_2002}
Mikhail Belkin and Partha Niyogi.
\newblock Laplacian {Eigenmaps} and {Spectral} {Techniques} for {Embedding} and
  {Clustering}.
\newblock In {\em Advances in {Neural} {Information} {Processing} {Systems}
  14}, pages 585--591. MIT Press, 2002.

\bibitem{chen2011integrating}
Jianhui Chen, Jiayu Zhou, and Jieping Ye.
\newblock Integrating low-rank and group-sparse structures for robust
  multi-task learning.
\newblock In {\em KDD}, pages 42--50. ACM, 2011.

\bibitem{colson2007overview}
Beno{\^\i}t Colson, Patrice Marcotte, and Gilles Savard.
\newblock An overview of bilevel optimization.
\newblock {\em Annals of operations research}, 153(1):235--256, 2007.

\bibitem{deng2017situation}
Dingxiong Deng, Cyrus Shahabi, Ugur Demiryurek, and Linhong Zhu.
\newblock Situation aware multi-task learning for traffic prediction.
\newblock In {\em ICDM}, pages 81--90. IEEE, 2017.

\bibitem{dong2016learning}
Xiaowen Dong, Dorina Thanou, Pascal Frossard, and Pierre Vandergheynst.
\newblock Learning laplacian matrix in smooth graph signal representations.
\newblock {\em IEEE Transactions on Signal Processing}, 64(23):6160--6173,
  2016.

\bibitem{flamary2014learning}
R{\'e}mi Flamary, Alain Rakotomamonjy, and Gilles Gasso.
\newblock Learning constrained task similarities in graphregularized multi-task
  learning.
\newblock {\em Regularization, Optimization, Kernels, and Support Vector
  Machines}, page 103, 2014.

\bibitem{franceschi_bilevel_2018}
Luca Franceschi, Paolo Frasconi, Saverio Salzo, Riccardo Grazzi, and
  Massimilano Pontil.
\newblock Bilevel {Programming} for {Hyperparameter} {Optimization} and
  {Meta}-{Learning}.
\newblock {\em arXiv:1806.04910 [cs, stat]}, June 2018.

\bibitem{frecon2018bilevel}
Jordan Frecon, Saverio Salzo, and Massimiliano Pontil.
\newblock Bilevel learning of the group lasso structure.
\newblock In {\em Advances in Neural Information Processing Systems}, pages
  8301--8311, 2018.

\bibitem{goncalves2019bayesian}
Andre Goncalves, Priyadip Ray, Braden Soper, David Widemann, Mari Nyg{\aa}rd,
  Jan~F Nyg{\aa}rd, and Ana~Paula Sales.
\newblock Bayesian multitask learning regression for heterogeneous patient
  cohorts.
\newblock {\em Journal of Biomedical Informatics: X}, 4:100059, 2019.

\bibitem{goncalves_multi-task_2014}
Andre~R. Goncalves, Puja Das, Soumyadeep Chatterjee, Vidyashankar Sivakumar,
  Fernando~J. Von~Zuben, and Arindam Banerjee.
\newblock Multi-task {Sparse} {Structure} {Learning}.
\newblock {\em Proceedings of the 23rd ACM CIKM '14}, 2014.

\bibitem{gonccalves2016multi}
Andr{\'e}~R Gon{\c{c}}alves, Fernando~J Von~Zuben, and Arindam Banerjee.
\newblock Multi-task sparse structure learning with gaussian copula models.
\newblock {\em The Journal of Machine Learning Research}, 17(1):1205--1234,
  2016.

\bibitem{goncalves_multi-task_2016}
Andre~R Goncalves, Fernando J~Von Zuben, and Arindam Banerjee.
\newblock Multi-task {Sparse} {Structure} {Learning} with {Gaussian} {Copula}
  {Models}.
\newblock {\em Journal of Machine Learning Research 17 (2016)}, page~30, 2016.

\bibitem{guidotti_survey_2018}
Riccardo Guidotti, Anna Monreale, Salvatore Ruggieri, Franco Turini, Fosca
  Giannotti, and Dino Pedreschi.
\newblock A {Survey} of {Methods} for {Explaining} {Black} {Box} {Models}.
\newblock {\em ACM Computing Surveys}, 51(5):1--42, August 2018.

\bibitem{han2015learning}
Lei Han and Yu~Zhang.
\newblock Learning multi-level task groups in multi-task learning.
\newblock In {\em AAAI}, volume~15, pages 2638--2644, 2015.

\bibitem{he_efficient_2019}
Xiao He, Francesco Alesiani, and Ammar Shaker.
\newblock Efficient and {Scalable} {Multi}-task {Regression} on {Massive}
  {Number} of {Tasks}.
\newblock In {\em The {Thirty}-{Third} {AAAI} {Conference} on {Artificial}
  {Intelligence} ({AAAI}-19)}, 2019.

\bibitem{hua2020online}
Fei Hua, Roula Nassif, C{\'e}dric Richard, Haiyan Wang, and Ali~H Sayed.
\newblock Online distributed learning over graphs with multitask graph-filter
  models.
\newblock {\em IEEE Transactions on Signal and Information Processing over
  Networks}, 6:63--77, 2020.

\bibitem{huang_sparse_2019}
Shuai Huang and Trac~D. Tran.
\newblock Sparse {Signal} {Recovery} via {Generalized} {Entropy} {Functions}
  {Minimization}.
\newblock {\em IEEE Transactions on Signal Processing}, 67(5):1322--1337, March
  2019.

\bibitem{hyvonen2015fast}
Ville Hyv{\"o}nen, Teemu Pitk{\"a}nen, Sotiris Tasoulis, Elias
  J{\"a}{\"a}saari, Risto Tuomainen, Liang Wang, Jukka Corander, and Teemu
  Roos.
\newblock Fast k-nn search.
\newblock {\em arXiv:1509.06957}, 2015.

\bibitem{intelligence_policy_2019}
High-Level Expert Group on~Artificial Intelligence.
\newblock Policy and investment recommendations for trustworthy {AI}.
\newblock June 2019.
\newblock Publisher: European Commission Type: Article; Article/Report.

\bibitem{jacob2009clustered}
Laurent Jacob, Jean-philippe Vert, and Francis~R Bach.
\newblock Clustered multi-task learning: A convex formulation.
\newblock In {\em NIPS}, pages 745--752, 2009.

\bibitem{jenni2018deep}
Simon Jenni and Paolo Favaro.
\newblock Deep bilevel learning.
\newblock In {\em Proceedings of the European Conference on Computer Vision
  (ECCV)}, pages 618--633, 2018.

\bibitem{Klement:2000:TN}
Erich~Peter Klement, Radko Mesiar, and Endre Pap.
\newblock {\em Triangular Norms}.
\newblock Kluwer Academic Publishers, Dordrecht, The Netherlands, 2000.

\bibitem{kumar2012learning}
Abhishek Kumar and Hal Daume~III.
\newblock Learning task grouping and overlap in multi-task learning.
\newblock {\em ICML}, 2012.

\bibitem{kunapuli2008bilevel}
Gautam Kunapuli.
\newblock {\em A bilevel optimization approach to machine learning}.
\newblock PhD thesis, PhD thesis, Rensselaer Polytechnic Institute, 2008.

\bibitem{li2018multi}
Limin Li, Xiao He, and Karsten Borgwardt.
\newblock Multi-target drug repositioning by bipartite block-wise sparse
  multi-task learning.
\newblock {\em BMC systems biology}, 12, 2018.

\bibitem{lipton_mythos_2018}
Zachary~C. Lipton.
\newblock The mythos of model interpretability.
\newblock {\em Communications of the ACM}, 61(10):36--43, September 2018.

\bibitem{liu_multi-task_2018}
Pengfei Liu, Jie Fu, Yue Dong, Xipeng Qiu, and Jackie Chi~Kit Cheung.
\newblock Multi-task {Learning} over {Graph} {Structures}.
\newblock {\em arXiv:1811.10211 [cs]}, November 2018.

\bibitem{lundberg_unified_2017}
Scott Lundberg and Su-In Lee.
\newblock A {Unified} {Approach} to {Interpreting} {Model} {Predictions}.
\newblock {\em arXiv:1705.07874 [cs, stat]}, November 2017.
\newblock arXiv: 1705.07874.

\bibitem{murugesan_adaptive_2016}
Keerthiram Murugesan, Jaime Carbonell, Hanxiao Liu, and Yiming Yang.
\newblock Adaptive {Smoothed} {Online} {Multi}-{Task} {Learning}.
\newblock page~11, 2016.

\bibitem{obozinski2010joint}
Guillaume Obozinski, Ben Taskar, and Michael~I Jordan.
\newblock Joint covariate selection and joint subspace selection for multiple
  classification problems.
\newblock {\em Statistics and Computing}, 20(2):231--252, 2010.

\bibitem{petersen_matrix_2008}
Kaare~Brandt Petersen and Michael~Syskind Pedersen.
\newblock {\em The {Matrix} {Cookbook}}.
\newblock February 2008.

\bibitem{ribeiro_why_2016}
Marco~Tulio Ribeiro, Sameer Singh, and Carlos Guestrin.
\newblock "{Why} {Should} {I} {Trust} {You}?": {Explaining} the {Predictions}
  of {Any} {Classifier}.
\newblock {\em arXiv:1602.04938 [cs, stat]}, August 2016.

\bibitem{ruder2017overview}
Sebastian Ruder.
\newblock An overview of multi-task learning in deep neural networks.
\newblock {\em arXiv preprint arXiv:1706.05098}, 2017.

\bibitem{saha_online_2011}
Avishek Saha, Piyush Rai, Hal~Daume Iii, and Suresh Venkatasubramanian.
\newblock Online {Learning} of {Multiple} {Tasks} and {Their} {Relationships}.
\newblock page~9, 2011.

\bibitem{spielman_solving_2004}
Daniel~A. Spielman and Shang-Hua Teng.
\newblock Solving {Sparse}, {Symmetric}, {Diagonally}-{Dominant} {Linear}
  {Systems} in {Time} \${O} (m{\textasciicircum}\{1.31\})\$.
\newblock {\em arXiv:cs/0310036}, March 2004.

\bibitem{spielman_nearly-linear_2012}
Daniel~A. Spielman and Shang-Hua Teng.
\newblock Nearly-{Linear} {Time} {Algorithms} for {Preconditioning} and
  {Solving} {Symmetric}, {Diagonally} {Dominant} {Linear} {Systems}.
\newblock {\em arXiv:cs/0607105}, September 2012.

\bibitem{van2000graph}
Stijn~Marinus Van~Dongen.
\newblock {\em Graph clustering by flow simulation}.
\newblock PhD thesis, 2000.

\bibitem{yuan2006model}
Ming Yuan and Yi~Lin.
\newblock Model selection and estimation in regression with grouped variables.
\newblock {\em Journal of the Royal Statistical Society: Series B (Statistical
  Methodology)}, 68(1):49--67, 2006.

\bibitem{Zhang2017Survey}
Yu~Zhang and Qiang Yang.
\newblock A survey on multi-task learning.
\newblock {\em arXiv preprint arXiv:1707.08114v2}, 2017.

\bibitem{zhang2010convex}
Yu~Zhang and Dit-Yan Yeung.
\newblock A convex formulation for learning task relationships in multi-task
  learning.
\newblock In {\em Proceedings of the Twenty-Sixth Conference on UAI}, pages
  733--742, 2010.

\bibitem{zhang2014regularization}
Yu~Zhang and Dit-Yan Yeung.
\newblock A regularization approach to learning task relationships in multitask
  learning.
\newblock {\em ACM Transactions on TKDD}, 8, 2014.

\bibitem{zhou2011malsar}
Jiayu Zhou, Jianhui Chen, and Jieping Ye.
\newblock Malsar: Multi-task learning via structural regularization.
\newblock {\em Arizona State University}, 21, 2011.

\end{thebibliography}

\newpage
\appendix
\section{Supplementary material}
This Supplementary material contains:
\begin{itemize}
    \item The proofs of the Theorems stated in the main part (Annex \ref{sec:proofs});
    \item the two versions of the algorithm: $\ell_2^2-GGMTL$ and $\ell_2-GGMTL$ (Annex \ref{sec:GGMTL})
    \item additional datasets on the America climate. This dataset shows also the computational efficiency of the method that we were able to apply to moderate large size dataset (i.e. $490$ nodes of the North America dataset) (Annex \ref{sec:other_datasets})  
\end{itemize}

\subsection{Proof of theorems} \label{sec:proofs}
\begin{proof}[Theorem \ref{th:hp}]
Since $V = [w_1^T,\dots,w_n^T]^T \in R^{dn \times 1}$ the vector of all models,  we define $X = \diag{X_1,\dots,X_n} \in R^{Nn \times dn}$ the block matrix of input and $Y=[y_1,\dots,y_n] \in R^{Nn \times 1}$ the vector of the output, the solution of the lower problem is
$$
(\lambda L_e \otimes I_d +  X^TX) V_e = X^TY
$$
or
\begin{eqnarray} \label{eq:sol_inner}
V_e = A^{-1} X^TY 
\end{eqnarray}
where we define the auxiliary matrix $A = \lambda L_{e} \otimes I_d +  X^TX$
%

Using the Sherman Morrison formula
$$
(A+uv^T)^{-1} = A^{-1} - \frac{A^{-1} uv^T A^{-1}}{ 1 + v^T A^{-1}u }
$$
and defining $b = \sqrt {\lambda \delta e_{ij}} (d_i -d_j) \otimes I_d$
we can write the increment $d_{ij} V$ on the edge $ij$ of $\delta e_{ij}$ as the difference of the models,
\begin{eqnarray}
d_{ij} V &=& (bb^T +A)^{-1} X^TY - V\\
&=& A^{-1}X^TY - \frac{A^{-1} bb^TA^{-1} }{1+b^TA^{-1} b}X^TY -A^{-1}X^TY \\
&=&- \frac{A^{-1} bb^TA^{-1} }{1+b^TA^{-1} b}X^TY
\end{eqnarray}
if $\delta e_{ij} \to 0$ then $b^TA^{-1} b \to 0$
\begin{eqnarray}
\nabla_{e_{ij}}  V(e)  
&=&   - \lambda  A^{-1}   ((d_i-d_j) \otimes I_d ) ((d_i-d_j)^T \otimes I_d) A^{-1}  X^TY  \nonumber \\
&=& - \lambda  A^{-1}  ((d_i-d_j)(d_i-d_j)^T \otimes I_d)  V 
\end{eqnarray}
since $\nabla_{e_{ij}}  V(e) = \lim_{\delta  e_{ij} \to 0} \frac{d_{ij} V }{\delta  e_{ij}}$ (see also the Thm.\ref{th:directional}).
Thus we have the shape of parameter gradient with respect to the hyper-parameters

\begin{eqnarray}
\nabla_{e^T}  V(e) & = &- \lambda A^{-1} (\underbrace{[b_{11}b_{11}^T , \dots , b_{nn}b_{nn}^T ]}_{B: ~ m ~ \text{items}} \otimes I_d) (V\otimes I_m) \\ 
&=& - \lambda A^{-1} (B \otimes I_d) (V\otimes I_m)
\end{eqnarray}

where $B_{ij} = b_{ij}b_{ij}^T =(d_i-d_j)(d_i-d_j)^T$, $A = \lambda L_{e} \otimes I_d +  X^TX$, $V = A^{-1}  X^TY$ and  $\nabla_{e^T}   V(e)  \in R^{dn \times m}$.



We have that
$$
\nabla_{w_i} f(V,e) =  X_i^{\text{val},T} ( X_i^\text{val} w_i -y_i^\text{val})
$$
or for all models
$$
\nabla_{V} f(V,e) =  X^{\text{val},T} ( X^\text{val} V -Y^\text{val})
$$

The general expression of hyper-gradient or total derivative is given by
\begin{equation}
\dd _{\lambda} f(w_\lambda,\lambda) = \nabla_\lambda f(w_\lambda,\lambda)+  \nabla_\lambda w_\lambda \nabla_w f(w_\lambda,\lambda)
\end{equation}
where $\nabla_\lambda$ is the partial derivative, while $\dd_\lambda$ is the total derivative, and we assume $w,\lambda$ the parameters and hyper-parameters. 
We notice that $\nabla_e H(e) = - \nabla_e \sum_{ij \in G} (|e_{ij}| \ln{ |e_{ij}|}- |e_{ij}|) =  - \sign{e} \circ \ln{ e} $, where $\circ$ is the Hadamard product.
We can now write the hyper-gradient for our problem
\begin{eqnarray} \label{eq:hg_proof}
\dd _{e} f(V_e,e)
&=& \nabla_e f(V_e,e)+  \nabla_e V_e \nabla_V f(V_e,e) \\
&=&  \xi  e +  \eta \sign{e} - \gamma \sign{e} \circ  \ln{e} \nonumber \\
&& - \lambda(V^T\otimes I_m)   (B^T \otimes I_d)  A^{-T}  X^{\text{val},T} ( X^\text{val} V -Y^\text{val})
\end{eqnarray}
where $B = [b_{11}b_{11}^T , \dots , b_{nn}b_{nn}^T ] \in R^{n \times nm}$. The other variables previously defined are $b_{ij} =(e_i-e_j) \in R^{n \times 1}$, $A = \lambda L_{e} \otimes I_d +  X^TX \in R^{dn \times dn} $ and $V = A^{-1}  X^TY$, $ L_{e} = \sum_l e_{ij} b_{l}$, $V = [w_1^T,\dots,w_n^T]^T \in R^{dn \times 1}$,  $X = \diag{X_1,\dots,X_n} \in R^{Nn \times dn}$, $Y=[y_1,\dots,y_n] \in R^{Nn \times 1}$  If $e\ge0$ then $\sign{e}=1_m$. 
\end{proof}

\begin{theorem}  \label{th:SSD}
The matrix $A$ of Thm.~\ref{th:hp} defines a Sparse, Symmetric, Diagonally-Dominant (SDD) linear system.
\end{theorem}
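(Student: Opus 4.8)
The plan is to write $A=\lambda\,L_e\otimes I_d + X^TX$ and to verify the three defining properties (symmetry, sparsity, diagonal dominance) on the two summands separately, then invoke closure under addition. Two structural facts drive everything: the edge weights satisfy $e_{ij}\ge 0$ (edges are constrained to $[0,1]$), so $L_e$ is a genuine graph Laplacian; and $X=\diag{X_1,\dots,X_n}$ is block diagonal, so $X^TX=\diag{X_1^TX_1,\dots,X_n^TX_n}$ is block diagonal with $d\times d$ positive semidefinite Gram blocks.

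Symmetry and sparsity are the routine parts. Each rank-one term $b_{ij}b_{ij}^T=(d_i-d_j)(d_i-d_j)^T$ is symmetric, hence so is $L_e=\sum_{ij}e_{ij}b_{ij}b_{ij}^T$; the Kronecker product of symmetric matrices is symmetric, and $X^TX$ is a Gram matrix, so $A$ is a sum of symmetric matrices. For sparsity I would count nonzeros: the inner graph $G$ is the $k$-nearest-neighbour graph on the $n$ tasks, so it has $m=O(kn)$ edges and $L_e$ has $O(kn)$ nonzeros; since $(L_e\otimes I_d)_{(i,a),(j,b)}=(L_e)_{ij}\delta_{ab}$, the factor $I_d$ merely replicates each nonzero along the matching feature index, giving $O(knd)$ nonzeros, while the block-diagonal $X^TX$ contributes at most $nd^2$. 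Thus $A$ has $O(nd(k+d))$ nonzeros against $(nd)^2$ possible entries, i.e. it is sparse for large $n$ (this is precisely the sparsity exploited in Thm.\ref{th:complex}).

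Diagonal dominance is the crux, and the Laplacian structure does most of the work. With $e_{ij}\ge 0$, the off-diagonal entry $(L_e)_{ij}=-e_{ij}\le 0$ and the diagonal $(L_e)_{ii}=\sum_{j\ne i}e_{ij}$, so each row sum of $L_e$ vanishes and $(L_e)_{ii}=\sum_{j\ne i}|(L_e)_{ij}|$, i.e. $L_e$ is weakly diagonally dominant. Using $(L_e\otimes I_d)_{(i,a),(j,b)}=(L_e)_{ij}\delta_{ab}$, a super-row indexed by $(i,a)$ has diagonal $(L_e)_{ii}$ and off-diagonal absolute sum $\sum_{j\ne i}|(L_e)_{ij}|$, so $L_e\otimes I_d$ inherits zero row sums and weak diagonal dominance. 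It then remains to add $X^TX$ while preserving $A_{ii}\ge\sum_{j\ne i}|A_{ij}|$: the clean situation is when each per-task block $X_i^TX_i$ is itself diagonally dominant with nonnegative diagonal, in which case $A$ is a Laplacian plus a diagonally-dominant positive part, hence symmetric diagonally dominant — indeed an SDDM matrix (Laplacian plus nonnegative diagonal), exactly the class admitting the near-linear-time solvers needed for Thm.\ref{th:complex}.

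I expect the genuine obstacle to be this last step. A general Gram block $X_i^TX_i$ is positive semidefinite but need \emph{not} be diagonally dominant, so strict diagonal dominance of $A$ cannot follow for arbitrary data without an extra hypothesis. I would therefore either state a feature-normalization assumption explicitly (decorrelated/standardized features per task, making each $X_i^TX_i$ near-diagonal), or prove the theorem in the representative case $X_i^TX_i=\diag{\cdot}$, where the identity ``Laplacian $+$ positive diagonal $=$ SDDM'' makes symmetry, sparsity, and diagonal dominance hold simultaneously and exactly, and then remark that this is the regime in which the claimed solver complexity is rigorous.
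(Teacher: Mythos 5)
Your decomposition is exactly the one the paper uses: split $A=\lambda L_e\otimes I_d + X^TX$ into its two summands and inspect each. The paper's own proof, however, is far more terse than yours --- it observes only that the first term is (up to a permutation of indices) block diagonal with Laplacian blocks, hence symmetric and sparse, and that the second is block diagonal with symmetric Gram blocks $X_i^TX_i$, and then simply \emph{declares} the system SDD; diagonal dominance is never actually argued. Your treatment is therefore strictly more careful: the zero-row-sum/weak-dominance argument for $L_e\otimes I_d$ (valid because $e_{ij}\ge 0$, so $L_e$ is a genuine graph Laplacian, and tensoring with $I_d$ just replicates rows) is correct, and your key observation --- that a Gram block $X_i^TX_i$ is positive semidefinite but need \emph{not} be diagonally dominant --- is also correct, and it exposes a real gap in the paper's proof, not merely in your own attempt. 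As stated, the theorem holds for arbitrary data only if one either assumes diagonally dominant (e.g., decorrelated or whitened) per-task Gram blocks, or weakens the claim to ``symmetric, sparse, positive semidefinite''; your proposed fix (an explicit feature-normalization hypothesis, or the diagonal case $X_i^TX_i=\diag{\cdot}$, giving Laplacian plus nonnegative diagonal, i.e.\ an SDDM matrix) is precisely what is needed to make the statement rigorous. Note that the caveat propagates: the complexity bound of Thm.~\ref{th:complex} invokes near-linear SDD solvers, so it inherits the same hidden assumption that you identified.
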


\begin{proof}[Theorem \ref{th:SSD}]
The property comes from inspecting the component of  
$A = \lambda L_{e} \otimes I_d +  X^TX $. The first term is a block diagonal matrix, whose block are the Laplacian matrix of the graph. This matrix is symmetric and sparse. The second element is a block diagonal matrix whose blocks are $X_i^TX_i \in R^{d \times d}$, that are symmetric matrices. Thus the system $AX=Y$ is a SDD linear system. 
\end{proof}

\begin{theorem}  \label{th:directional}
	The directional derivative of $F(A)=A^{-1}$ along $B$ is $-A^{-1}BA^{-1}$.
\end{theorem}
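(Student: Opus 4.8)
The plan is to compute the directional derivative directly from its definition as a limit, namely
\begin{equation}
DF(A)[B] = \lim_{t \to 0} \frac{(A+tB)^{-1} - A^{-1}}{t},
\end{equation}
and to reduce the computation to an algebraic identity plus a continuity argument. First I would observe that since $A$ is invertible and the set of invertible matrices is open (the determinant is continuous and nonzero at $A$), the perturbed matrix $A+tB$ remains invertible for all sufficiently small $|t|$, so the expression above is well defined in a punctured neighbourhood of $t=0$.

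The key step is the resolvent-type identity. Writing $A+tB - A = tB$ and factoring through the two inverses gives
\begin{equation}
(A+tB)^{-1} - A^{-1} = (A+tB)^{-1}\bigl(A - (A+tB)\bigr)A^{-1} = -t\,(A+tB)^{-1} B A^{-1}.
\end{equation}
Dividing by $t$ yields the difference quotient $-(A+tB)^{-1} B A^{-1}$, which no longer contains a spurious $1/t$ factor. Taking $t \to 0$ and using continuity of matrix inversion on the open set of invertible matrices (so that $(A+tB)^{-1} \to A^{-1}$) gives the claimed value $-A^{-1} B A^{-1}$. This is essentially the same manipulation already used in the Sherman--Morrison step of the proof of Thm.~\ref{th:hp}, specialized to a full perturbation $tB$ rather than a rank-one update, which is reassuring for internal consistency.

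An equivalent route, which I would mention as an alternative, is to differentiate the identity $(A+tB)(A+tB)^{-1} = I$ with respect to $t$ and evaluate at $t=0$, using the product rule: this gives $B A^{-1} + A\,\tfrac{d}{dt}(A+tB)^{-1}\big|_{t=0} = 0$, and left-multiplying by $A^{-1}$ recovers the same formula. I do not expect any genuine obstacle here; the only point requiring care is the justification that inversion is differentiable (or at least continuous) near $A$, so that the limit and the product-rule differentiation are legitimate. Both the openness of the invertible set and the continuity of $A \mapsto A^{-1}$ are standard, so the argument is short and the result follows immediately.
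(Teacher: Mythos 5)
Your proof is correct, and it takes a genuinely more self-contained route than the paper's. The paper's own proof simply cites the first-order expansion $(A+hB)^{-1} \approx A^{-1} - hA^{-1}BA^{-1}$ from the Matrix Cookbook \cite{petersen_matrix_2008} and substitutes it into the difference quotient; since that cited formula \emph{is} essentially the content of the theorem, the paper's argument amounts to a pointer to a reference, with no control of the error hidden in the ``$\approx$'' and no remark on why $A+hB$ is invertible for small $h$. You instead establish the exact resolvent-type identity $(A+tB)^{-1} - A^{-1} = -t\,(A+tB)^{-1} B A^{-1}$, which makes the difference quotient \emph{exact} --- there is no remainder term to estimate --- and then conclude via two standard facts: invertible matrices form an open set, and inversion is continuous on it. What your route buys is rigor and self-containment; what the paper's route buys is brevity and stylistic consistency with the Sherman--Morrison manipulation in the proof of Thm.~\ref{th:hp} (also an additive perturbation of $A$), but as a proof it is strictly weaker than yours. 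Your alternative derivation, differentiating $(A+tB)(A+tB)^{-1} = I$ and solving for the derivative, is likewise valid and is the standard textbook argument; either version of your write-up would serve as a sound replacement for the paper's proof.
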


\begin{proof}[Theorem \ref{th:directional}]
	According to \cite{petersen_matrix_2008} page 19, Ch.3.4 , Eq. 167, we have that $(A+hB)^{-1} \approx A^{-1} - hA^{-1}BA^{-1}$ when $h \ll A,B$. When now consider the following directional derivative of the function
	$F: R^{n \times n} \to R^{n \times n} : A \to A^{-1}$ or $F(A)=A^{-1}$, when then compute
	$F(A+hB)-F(A) =(A+hB)^{-1}-A^{-1} \approx A^{-1} - hA^{-1}BA^{-1} -A^{-1} = - hA^{-1}BA^{-1}$. It follows that the directional derivative is $-A^{-1}BA^{-1}$.
\end{proof}

\begin{proof}[Theorem \ref{th:sol}]
We notice that by setting $\dd _{e} f(V_{e^{(t)}},e^{(t)}) =0$ we can compute the optimal $e$ in closed-form.
If we assume $e \ge 0$ and $\xi =0 $, we have
$$
1_m - \lambda' (V^T\otimes I_m)   (B^T \otimes I_d)  A^{-T}  X^{\text{val},T} ( X^\text{val} V -Y^\text{val}) = 0
$$
where $ \lambda' = \lambda / \eta $. Suppose that $M=(V^T\otimes I_m)   (B^T \otimes I_d) \in R^{m \times dn}$  and $C = X^{\text{val},T} ( X^\text{val} V -Y^\text{val}) \in R^{nd \times 1}$, we have
$$
A M^{-1} 1_m =\lambda'  C
$$
\begin{eqnarray}
( \lambda L_{e} \otimes I_d +  X^TX) M^{-1} 1_m &=&\lambda'  C  \\
(\lambda L_{e} \otimes I_d )M^{-1} 1_m &=& \lambda'  C  - X^TX M^{-1} 1_m \\
\lambda( (E \diag{e} E^T ) \otimes I_d )u  &= &\lambda'  C  -X^TX u \\
( (E \diag{e} E^T ) \otimes I_d )u  &= & v\\
(E \otimes I_d )( \diag{e}  \otimes I_d ) (E^T  \otimes I_d ) u  &= & v\\
(z^T \otimes (E \otimes I_d )) \mvec{\diag{e}  \otimes I_d } &=& v \\
(z^T \otimes (E \otimes I_d )) K e & =& v
\end{eqnarray}
where $u = M^{-1} 1_m$, $v = 1/\eta C  -1/\lambda X^TX u$, $z = (E^T  \otimes I_d ) u  $. The matrix $K = [\mvec {\diag{d_0} \otimes I_d},\dots, \mvec{\diag{d_{m-1}} \otimes I_d}] \in R^{m^2d^2 \times m}$, where $d_i \in R^{m \times 1}$ is the indicator vector.
\end{proof}

\begin{proof}[Theorem \ref{th:complex}]
In $A = \lambda L_{e} \otimes I_d +  X^TX $, the first term is a block diagonal matrix, each block is a Laplacian matrix that has $4m$ non zero elements, in total $O(dm)$. The second element is a block diagonal matrix whose blocks are $X_i^TX_i \in R^{d \times d}$ of $d^2$ entries, in total $nd^2$. In total, computing $A$ has complexity $O(nd^2+dm)=O(nd^2)$. 
Since $A$ for Thm.2 is a SSD, solving in $A$   \cite{spielman_solving_2004,spielman_nearly-linear_2012} has complexity is $O((nd)^{1.31})$ or $O((nd)\ln ^c(nd) )$, with $c$ constant, since $A$ has dimension $nd \times nd$ and the number of edges is $m=kn$. Since the matrix $B \otimes I_d$ has $4md$ non zero elements and $V\otimes I_m$ has $ndm$ non zero elements, the product of the  $(V^T\otimes I_m)(B^T \otimes I_d)c$ requires $O(dnm)=O(dn^2)$ operation, for $c \in R^{dn \times 1}$.  
\end{proof}

\subsection{The $\ell_2^2$-GGMTL and  $\ell_2$-GGMTL algorithms} \label{sec:GGMTL}
This section attempts to clarify the difference of the two variations of the proposed method. The first modification of Alg.\ref{alg:GGMTL} with $\ell^2_2$ norm (Sec.\ref{sec:l22-GGMTL}) is presented in Alg.\ref{alg:l22-GGMTL}, while the modification with $\ell^2_2$ norm (Sec.\ref{sec:l2-GGMTL}) is described in Alg.\ref{alg:l2-GGMTL}.

\begin{algorithm}[!t]
	\SetNoFillComment
	\SetKwInOut{Input}{Input}
	\SetKwInOut{Output}{Output}
	\Input{$\{X_t, y_t\}$ for $t=\{1, 2, ..., n\}$,  $\xi, \eta,\lambda,\nu$}
	\Output{$V= [w_1^T, ..., w_n^T]^T, L_e$}
	\For{$i\gets1$ \KwTo $n$}{
		Solve $w_i$ by Linear Regression on $\{X_i, y_i\}$
	}
	Construct $k$-nearest neighbor graph $G$ on $V$\;
	Construct $E$ the incident matrix of $G$\;
	$\{X^\text{tr}_t, y^\text{tr}_t\},\{X^\text{val}_t, y^\text{val}_t \} \gets \text{split}( \{X_t, y_t\})$ \Comment{validation-training split}\;
	\While{not converge}{
		compute $\dd _{e} f(V_{e^{(t)}},e^{(t)})$ using Eq. (\ref{eq:hg}) \Comment{compute hyper-gradient}\;
		Update $e$: $e^{(t+1)} = [ e^{(t)}+\nu \dd _{e} f(V_{e^{(t)}},e^{(t)})]_{+}$  \Comment{edges' values update}\;
	}
	Solve Eq.\ref{eq:bi_main_in} on $\{X_t, y_t\}$ \Comment{Train on the full dataset}\;
	\Return $V,L_e$\;
	\caption{$\ell_2^2$-GGMTL}
	\label{alg:l22-GGMTL}
\end{algorithm}

\begin{algorithm}
	\SetNoFillComment
	\SetKwInOut{Input}{Input}
	\SetKwInOut{Output}{Output}
	\Input{$\{X_t, y_t\}$ for $t=\{1, 2, ..., n\}$,  $\xi, \eta,\lambda, \nu$}
	\Output{$V= [w_1^T, ..., w_n^T]^T, L_e$}
	\For{$i\gets1$ \KwTo $n$}{
		Solve $w_i$ by Linear Regression on $\{X_i, y_i\}$
	}
	Construct $k$-nearest neighbor graph $G$ on $V$\;
	Construct $E$ the incident matrix of $G$\;
	\tcp{validation-training split}
	$\{X^\text{tr}_t, y^\text{tr}_t\},\{X^\text{val}_t, y^\text{val}_t\} \gets \text{split}( \{X_t, y_t\})$  \; 		
	\While{not converge}{
		\tcp{compute hyper-gradient}
		compute $\dd _{e} f(V_{e^{(t)}},e^{(t)})$ using Eq. (\ref{eq:hg}), with $e=e \circ l(V)$ of Eq.\ref{eq:elle} and alternating with solution of Eq.\ref{eq:bi_ccmtl_sub} (given by Eq.\ref{eq:sol_inner}).\;
		\tcp{edges' values update}		
		Update $e$: $e^{(t+1)} = [ e^{(t)}+\nu \dd _{e} f(V_{e^{(t)}},e^{(t)})]_{+}$\;
	}
	\tcp{Train on the full datasets with alternate optimization}
	Solve Eq.\ref{eq:bi_ccmtl} on $\{X_t, y_t\}$\;
	\Return $V,L_e$\;
	\caption{$\ell_2$-GGMTL}
	\label{alg:l2-GGMTL}
\end{algorithm}

\subsection{Visualization of GGMTL graphs on other datasets} \label{sec:other_datasets}
Two additional datasets are presented and graph visualized to show the property of the proposed method. 

\subsubsection{South America climate data}

\texttt{South} data contains the monthly mean temperature of 250 spatial locations in South America for 100 years (over the time period 1901-2000)\footnote{MSSL code: \url{bitbucket.org/andreric/mssl-code}\\ Weather dataset: \url{https://www-users.cs.umn.edu/~agoncalv/softwares.html}}. These locations are distributed over a $2.5^o $ x $2.5^o$ grid on the geographic coordinate system (latitudes, longitudes), see \cite{goncalves_multi-task_2016}. Following \cite{goncalves_multi-task_2014}, five partitions are created by considering a moving window over the first 50 years for training and a window over the following ten years for testing. By shifting these two window ten years for each partitions, we obtain the five training/testing pairs.
\subsubsection{North America climate data}
Similarly, the \texttt{North} data contains the monthly mean temperature of 490 spatial locations in North America.

\begin{figure}
	\centering
	\begin{subfigure}{.5\textwidth}
		\centering
		\includegraphics[width=1.\linewidth,trim=0 0cm 0 1cm, clip]{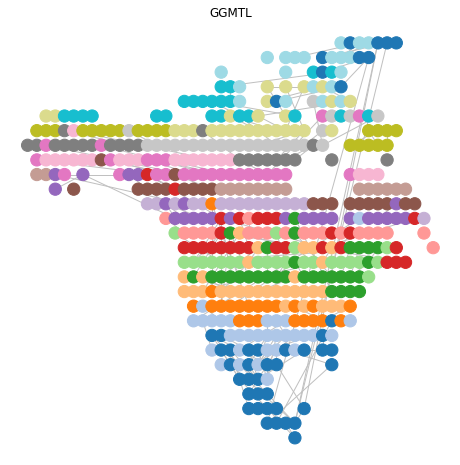}
		\caption{GGMTL}
		\label{fig:GGMTL_north}
	\end{subfigure}%
	\begin{subfigure}{.5\textwidth}
		\centering
		\includegraphics[width=1.\linewidth,trim=0 0cm 0 1cm, clip]{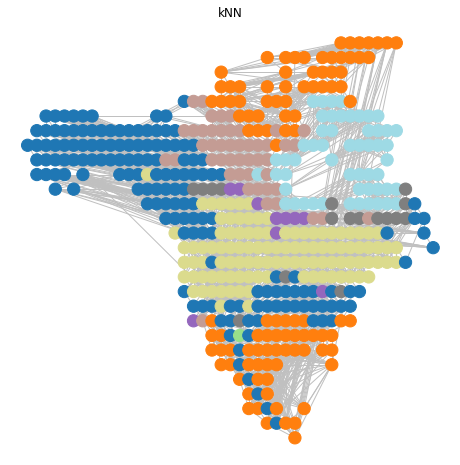}
		\caption{kNN}
		\label{fig:kNN_north}
	\end{subfigure}
	\caption{The discovered graphs by GGMTL and k-NN on the the \texttt{North} datasets.}
	\label{fig:climate_north}
\end{figure}

\begin{figure}
	\begin{subfigure}{.5\textwidth}
		\centering
		\includegraphics[width=1.\linewidth,trim=0 0cm 0 1cm, clip]{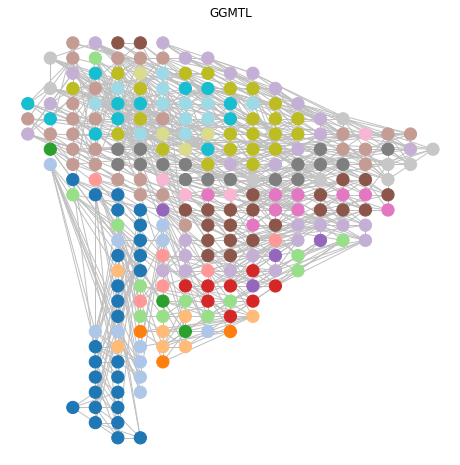}
		\caption{GGMTL (south)}
		\label{fig:GGMTL_south}
	\end{subfigure}%
	\begin{subfigure}{.5\textwidth}
		\centering
		\includegraphics[width=1.\linewidth,trim=0 0cm 0 1cm, clip]{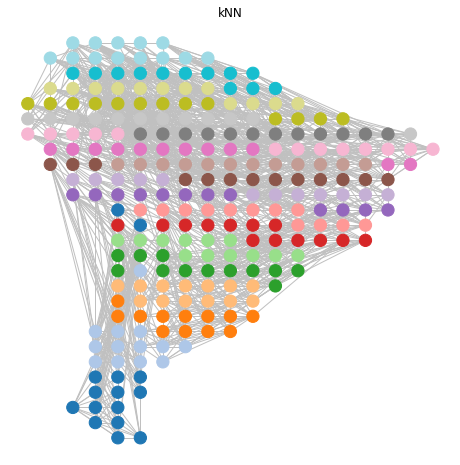}
		\caption{kNN (south)}
		\label{fig:kNN_south}
	\end{subfigure}

	\caption{The discovered graphs by GGMTL and k-NN on the the \texttt{South} datasets.}
	\label{fig:climate_south}
\end{figure}

\end{document}